\documentclass[12pt]{article}
\usepackage{graphicx} 
\usepackage{amsmath}  
\usepackage{amsfonts} 
\usepackage{amssymb}  
\usepackage{hyperref} 
\usepackage{geometry} 
\usepackage{theme}
\usepackage{mathtools}
\usepackage[numbers]{natbib}
\usepackage{shortcuts}
\usepackage{algorithm}
\usepackage{algpseudocode}
\usepackage[utf8]{inputenc}
\usepackage[T1]{fontenc}
\usepackage{amsmath}
\usepackage{multirow}
\usepackage{amssymb}
\setlist[enumerate]{itemsep=0mm}
\usepackage{amsfonts}
\usepackage{graphicx}
\usepackage{hyperref}
\usepackage{color}
\usepackage{ulem}
\usepackage{diagbox}
\usepackage{array}
\usepackage{tabularx}
\usepackage{float}

\newcolumntype{C}[1]{>{\centering\arraybackslash}m{#1}}
\newcolumntype{Y}{>{\centering\arraybackslash}X}

\newcommand{\HS}{\mathcal{H}}
\renewcommand{\ker}{k}

\newcommand{\bsx}{\mathbf{x}}

\newcommand{\MFker}{\textit{\mbox{MF ker-only}}}
\newcommand{\MFmuker}{\textit{\mbox{MF mean-ker}}}
\newcommand{\MFmu}{\textit{\mbox{MF mean-only}}}
\newcommand{\dist}{\mathcal{D}}
\newcommand{\Nobs}{N_{\text{MC}}}

\title{Multifidelity Gaussian process regression for solving nonlinear partial differential equations }
\author{
    Fatima-Zahrae El-Boukkouri\thanks{Institut de Mathématiques de Toulouse, Universit\'e de Toulouse, INSA, Toulouse, 31077, France, \texttt{el-boukkouri@insa-toulouse.fr}} 
    \and
    Josselin Garnier\thanks{CMAP, CNRS, Ecole polytechnique, Institut Polytechnique de Paris, Palaiseau, 91120, France, \texttt{josselin.garnier@polytechnique.edu}} \and
    Olivier Roustant\thanks{Institut de Mathématiques de Toulouse, Universit\'e de Toulouse, INSA, Toulouse, 31077, France, \texttt{roustant@insa-toulouse.fr}} 
}
\begin{document}
\maketitle

\begin{abstract}
Solving nonlinear partial differential equations (PDEs) using kernel methods offers a compelling alternative to traditional numerical solvers. However, the performance of these methods strongly depends on the choice of kernel. In this work, as the available information is inherently multifidelity, we propose a kernel learning approach based on cokriging, leveraging empirical information from multifidelity simulations. In the first step, we fit a differentiable non-stationary kernel to an empirical kernel obtained from low-fidelity simulations. In the second step, we derive a high-fidelity kernel with estimated hyperparameters, and construct a corresponding high-fidelity mean using the multifidelity framework. These components can then be used within a Gaussian process framework for solving PDEs. Finally, we demonstrate the performance of the proposed physics-informed method on the Burgers' equation.
\end{abstract}

\section{Introduction}

\subsection*{Context and aim of the paper}

Partial differential equations (PDEs) constitute the standard mathematical framework for modeling conservation laws and transport phenomena in physics. Their theoretical analysis and numerical approximation have been extensively developed~\cite{evans2022partial,grossmann2007numerical,brenner2008mathematical,quarteroni1994numerical}. Classical solvers such as finite difference and finite element methods enforce the governing equations at the discrete level and rely on well-established stability analyses. However, their computational cost can become significant in nonlinear or high-dimensional regimes, while purely data-driven approaches may struggle to preserve structural physical constraints when data are limited~\cite{raissi2019physics,karniadakis2021physics}. These challenges motivate hybrid methodologies combining statistical learning with explicit physical modeling.

In many practical applications, the available information is inherently multifidelity. In particular, low-fidelity simulators provide inexpensive but approximate solutions, while high-fidelity measurements or simulations are accurate but can be sparse. The aim is to reconstruct the solution of the governing PDE by consistently combining these heterogeneous sources of information while respecting the underlying physical constraints. A typical example arises in coastal flooding prediction, where fast approximate models are combined with sparse observations to reconstruct the dynamics of marine submersion. Gaussian processes (GPs) provide a natural probabilistic framework for this task~\cite{williams2006gaussian,stein1999interpolation,cressie1993statistics}. In Gaussian process regression (GPR), the unknown field is modeled as a GP characterized by a mean and a covariance function, which encodes prior assumptions on regularity and correlation structure. Its multifidelity extension, co-kriging~\cite{kennedy, Cokriging}, combines correlated information sources and has become a standard tool for hierarchical modeling of computer experiments~\cite{forrester2007multi,legratiet2013recursive,peherstorfer2018survey}. More recently, physics-informed Gaussian process methodologies have incorporated partial physical knowledge directly into the prior construction or covariance design~\cite{yang2018physics,Owhadi}. This leads to probabilistic numerical formulations that combine Bayesian inference with PDE constraints~\cite{stuart2010inverse,cockayne2019bayesian}.

\subsection*{Related works}

Recent works have explored the use of GPs for solving nonlinear PDEs in a physics-informed setting. In particular, the framework proposed by Chen et al.~\cite{Owhadi} formulates PDE resolution as a GPR problem in a reproducing kernel Hilbert space (RKHS), building upon classical kernel-based approximation and meshfree methods~\cite{wendland2004scattered,fasshauer2007meshfree}. In this formulation, the solution is obtained as the minimum-norm element satisfying the PDE constraints at collocation points, providing a rigorous variational interpretation together with convergence guarantees. However, the performance of this approach strongly depends on the choice of the covariance kernel, which determines the structure of the admissible solution space. To address this sensitivity, several works have proposed to learn covariance structures in a physics-informed manner, including physics-informed kernel learning~\cite{doumeche2025physics} and physics-informed deep kernel learning~\cite{wang2022physics}, which combine data and physical constraints to improve prediction and uncertainty quantification. 
Nelsen et al.~\cite{owhadi_parameters} proposed a kernel-learning strategy to construct covariance functions adapted to the underlying PDE using a bilevel optimization framework. Their approach relies on two sets of data: one used to solve the PDE through GPR, and another used to learn the kernel parameters. The latter are optimized through a bilevel learning procedure, allowing the kernel to adapt to the underlying PDE solution.

However, these approaches are mainly developed for single-fidelity settings, which assume all data comes from a single, usually high-fidelity, source. As a result, they do not exploit the hierarchical structure of multifidelity data present in practical applications. By contrast, this work constructs kernels in a multifidelity framework: it uses inexpensive low-fidelity simulations to learn the main covariance structure and then refines it with high-fidelity information.
\subsection*{Contributions}
In contrast to approaches that focus on tuning parametric kernel families, our work addresses the construction of the covariance kernel itself within a multifidelity, physics-informed framework. First, we build upon the multifidelity modeling framework introduced by Kennedy and O’Hagan~\cite{kennedy} and its co-kriging formulation~\cite{Cokriging}, where information from low- and high-fidelity sources is fused through correlated GPs. In this setting, the covariance structure of the low-fidelity process is typically estimated empirically from simulator output values. However, this empirical covariance is defined only on the simulation grid and is generally not smooth enough to be used in PDE-based Gaussian process solvers~\cite{Owhadi}. To address this issue, we introduce a regularization and smoothing procedure that approximates the empirical covariance by a differentiable kernel belonging to a low-dimensional parametric family. This construction produces a smooth kernel that preserves the dominant variability observed in the low-fidelity simulations while remaining compatible with the regularity requirements imposed by the governing PDE. Compared with recent approaches that learn kernel parameters through large-scale bilevel optimization~\cite{owhadi_parameters}, the proposed procedure is computationally inexpensive and involves only a small number of parameters. Finally, we use the resulting kernel within the GP PDE-solving framework of \cite{Owhadi}.

In addition to this kernel construction, we investigate the role of the GP mean in a multifidelity setting and propose a mean-correction strategy incorporating high-fidelity information. These constructions lead to three variants of the method: the first one incorporates multifidelity information through the kernel only; the second one incorporates multifidelity information through both the kernel and the mean; and the third one uses multifidelity information through the mean only, while learning the kernel directly from high-fidelity residuals.

Furthermore, we establish that, when the mean coincides with the true solution, the method exactly recovers the PDE solution. Finally, we extend this formulation to incorporate simultaneously PDE constraints and high-fidelity sensor measurements within the RKHS variational problem. This unified formulation allows us to combine physical constraints, multifidelity simulations, and sparse observations in a single GP reconstruction framework.

\subsection*{Outline of the paper}
The rest of the paper is organized as follows. 
In Section~\ref{sec:background}, we briefly recall the GP and RKHS framework used for PDE solving, together with the multifidelity cokriging model of Kennedy and O’Hagan. We then present, in Section~\ref{Sec:PhI_methodology}, 
the physics-informed multifidelity methodology, introduce three kernel-learning strategies and discuss the classes of kernels considered. 
In Section~\ref{sec:linburgers}, we illustrate, first, the approach on a linearized Burgers' equation, which allows us to compare the three methodologies and provide theoretical insight into the mean-corrected formulations. 
The nonlinear Burgers' equation is then studied in Section~\ref{sec:burgers}, where we highlight the difficulty of kernel selection in a single-fidelity setting and demonstrate the improvements obtained with the proposed multifidelity strategy.
Some conclusions and perspectives are finally drawn in Section~\ref{sec:conclusion}.

\section{Background and notation}
\label{sec:background}
This section provides the two building blocks used in the remainder of the paper:
(i) the PDE--RKHS variational formulation of~\cite{Owhadi}, which turns PDE solving into a minimum-norm problem in a kernel space, and (ii) the multifidelity cokriging model of~\cite{kennedy,Cokriging}, which enables us to learn covariance structure from inexpensive simulations and transfer it to the high-fidelity regime. In Section~\ref{Sec:PhI_methodology}, we combine these ingredients to construct PDE-adapted RKHS priors in a multifidelity setting.

\subsection{Solving non linear PDEs with Gaussian processes}
We consider a bounded domain \(\Omega \subseteq \mathbb{R}^{p}\) for \(p \geq 1\) and a nonlinear PDE of the form:
\[ 
\begin{cases}
\label{eq:PDE}
\mathcal{P}\left(u^{\star}\right)(\bsx) = f(\bsx), & \forall \bsx \in \Omega, 
\\
\mathcal{B}\left(u^{\star}\right)(\bsx) = g(\bsx), & \forall \bsx \in \partial \Omega,
\end{cases}
\]

where \(\mathcal{P}\) is a nonlinear differential operator, \(\mathcal{B}\) is a boundary operator, and \(f, g\) are given functions. We assume that the PDE is well-defined pointwise and has a unique strong solution. Let \(\mathcal{H}(k)\) be a suitable RKHS for the solution \(u^{\star}\), with an associated kernel $\ker$. 

Chen et al. \cite{Owhadi} propose approximating \(u^{\star}\) using a GP with covariance kernel $\ker$
conditioned to satisfy the PDE at a finite set of collocation points in \(\overline{\Omega}\), and then computing the Maximum A Posteriori (MAP) of this conditioned GP. Let \(\left\{\bsx_{i}\right\}_{i=1}^{M}\) be a set of points in \(\overline{\Omega}\), with \(\bsx_{1}, \ldots, \bsx_{M_{\Omega}} \in \Omega\) (interior) and \(\bsx_{M_{\Omega}+1}, \ldots, \bsx_{M } \in \partial \Omega\) (boundary).
The optimization problem to solve is then:
\begin{equation}
\underset{u \in \HS(\ker)}{\operatorname{minimize}} \, \|u\|_{\HS(\ker)} \quad
\text{s.t.} \quad 
\begin{cases}
\mathcal{P}(u)(\bsx_{m}) = f(\bsx_{m}), & m = 1,\ldots, M_{\Omega},\\[2mm]
\mathcal{B}(u)(\bsx_{m}) = g(\bsx_{m}), & m = M_{\Omega}+1,\ldots, M.
\end{cases}
\label{relaxation1}
\end{equation}

This means that we aim to approximate \(u^{\star}\) with the minimum norm element in $\HS(k)$ that satisfies the PDE and boundary conditions at the collocation points \(\left\{\bsx_{i}\right\}_{i=1}^{M}\).
The methodology for solving \eqref{relaxation1} and its theoretical guarantees are detailed in \cite{Owhadi}.
 In this paper, we adopt this approach and focus specifically on the choice of a suitable RKHS $\HS(k)$, which plays a crucial role in capturing the properties of the solution $u^{\star}$.

\subsection{The cokriging model}
In this subsection, we present the two-level multifidelity modeling approach proposed by \cite{kennedy}, which is based on a co-kriging model.
The prior on the low-fidelity function is modeled as a GP:
\[
Y_L \sim \mathcal{GP}(\mu_L, \ker_L),
\]
and the prior of the high-fidelity function is modeled similarly as:
\[
Y_H \sim \mathcal{GP}(\mu_H, \ker_H)  .
\]

\cite{kennedy} proposed a multifidelity formulation based on the autoregressive model for \( Y_H \):
\begin{equation}
Y_H = \rho Y_L + Y_d,
\end{equation}
where \( \rho \in \mathbb{R} \) is a regression parameter, and 
\( Y_d \sim \mathcal{GP}(\mu_d, \ker_d) \) 
is a GP that models the residual term
\( Y_H - \rho Y_L \).
The AR(1) structure follows from a Markov property, that is to say, 
the conditional independence between \( Y_H(\bsx) \) and \( Y_L(\bsx') \), given \( Y_L(\bsx) \), for all \( \bsx' \neq \bsx \) in the domain \( D \), i.e.,
\begin{equation}
\text{Cov}\left\{ Y_H(\bsx), Y_L(\bsx') \mid Y_L(\bsx) \right\} = 0.
\end{equation}
We consider both high-resolution data \( y_H \) and low-resolution data \( y_L \), observed respectively on the grids \( X_H \) and \( X_L \), with \( X_H \subseteq X_L \). This hierarchical structure enables 
efficient and rapid hyperparameter selection by maximum likelihood.

\section{Physics-informed cokriging for solving PDEs}
\label{Sec:PhI_methodology}

\subsection{Our Methodology}
\label{subsec:nonstationary-kernel}
Our objective is to construct a RKHS tailored to the numerical solution of PDEs. We assume access to:
\begin{itemize}
    \item A stochastic low-fidelity simulator providing $\Nobs$ realizations 
    $\left( \boldsymbol{y}_L^{\,i} \right)_{i=1}^{\Nobs}$ 
    on a grid $X_L= \{\bsx_{L,i}\}_{i=1}^{N_L} $, where $\bsx_{L,i} \in \mathbb{R}^p$ and $N_L$ denotes the number of points in the low-fidelity grid. 
    
    \item A high-fidelity simulator or experiment providing observations 
    $\boldsymbol{y}_H$ on a grid 
    $X_H \subseteq X_L$, where
    \[
    \boldsymbol{y}_H = (y_{H,1}, \ldots, y_{H,N_H}) \in \mathbb{R}^{N_H},
    \quad
    X_H = \{\bsx_{H,i}\}_{i=1}^{N_H} , 
    \]
    where $\bsx_{H,i} \in \mathbb{R}^p$ and $N_H$ denotes the number of points in the high-fidelity grid.
\end{itemize}

Our strategy is to place a GP prior with mean $\mu_H$ and covariance kernel $\ker_H$
on the high-fidelity solution and to define the numerical approximation as the MAP estimator under the PDE constraints and the high-fidelity observations.
The key point is that the prior mean \(\mu_H\) and covariance kernel \(\ker_H\) are constructed using the multifidelity framework described in Section~\ref{sec:background}. Depending on how this multifidelity information is incorporated, different choices of \((\mu_H, \ker_H)\) lead to different RKHS structures and therefore to different numerical methods.
The three constructions considered in this work are summarized in Table~\ref{tab:methodologies}, and detailed in the next paragraphs.

\begin{table}[H]
\centering
\small
\begin{tabularx}{0.96\textwidth}{|C{2.6cm}||Y|Y||Y|}
\hline
\diagbox[innerwidth=\linewidth]{$\mu_H$}{$\ker_H$}
& $\ker'_H$
& $\rho^2 \ker_{\mathrm{opt}} + \ker_d$
& Comments on $\mu_H$ \\
\hline \hline

$0$
& Not a multifidelity setting
& {\MFker}
& \\
\hline

$\hat{\mu}_H$
& {\MFmu}
& {\MFmuker}
& $\hat{\mu}_H:=$ GP prediction of $\rho\mu_L+\mu_d$ given observations. \\
\hline \hline

Comments on $\ker_H$
& $\ker'_H:=$ selected in a given family of kernels.
& $\ker_{\mathrm{opt}}:=$ smooth kernel, proxy of 
$\ker_L$. \newline
  $\ker_d:=$ anisotropic Gaussian kernel.
& \\
\hline
\end{tabularx}
\caption{Summary of the three methodologies \MFker, \MFmu, \MFmuker, according to the choice of mean and kernel.}
\label{tab:methodologies}
\end{table}

\subsubsection*{{\MFker}: Multifidelity kernel learning}
We consider here the following problem  
\begin{equation}
\underset{u \in \HS(\ker)}{\operatorname{minimize}} \, \|u\|_{\HS(\ker)} \quad
\text{s.t.} \quad 
\begin{cases}
\mathcal{P}(u)(\bsx_{m}) = f(\bsx_{m}), & m = 1,\ldots, M_{\Omega},\\[2mm]
\mathcal{B}(u)(\bsx_{m}) = g(\bsx_{m}), & m = M_{\Omega}+1,\ldots, M, \\[2mm]
u(\bsx_{H, i}) = y_{H,i},  & i = 1, \ldots, N_H ,
\end{cases}
\label{eq:Methodology1}
\end{equation}
where the kernel $\ker$ is unknown, and must be estimated.
In this methodology, we construct the kernel $\ker$ using a $3$-stage methodology: estimation of the empirical kernel of the low-fidelity model, approximation by a smooth kernel, resolution of the problem.

\paragraph{Step 1: empirical kernel estimation for the LF simulator.} The cokriging model described in \cite{Cokriging} is based on $\Nobs$ realizations $(\boldsymbol{y}_L^i)_{i=1}^{\Nobs}$ of the low-fidelity simulator on the grid $X_L$ and high-fidelity observations $\boldsymbol{y}_H$ on $X_H$.
For that, we begin by estimating the empirical mean and covariance of the low-fidelity GP:
\begin{equation}
    \mu_L(\bsx) = \mu_{\text{MC}}(\bsx) = \frac{1}{\Nobs} \sum_{i=1}^{\Nobs} \boldsymbol{y}_L^i(\bsx),
    \label{eq:MCmean}
\end{equation}
\begin{equation}
    \ker_L(\bsx,\bsx') = \ker_{\text{MC}}(\bsx,\bsx') = \frac{1}{\Nobs-1} \sum_{i=1}^{\Nobs} \left( \boldsymbol{y}_L^i(\bsx) - \mu_{\text{MC}}(\bsx) \right) \left( \boldsymbol{y}_L^i(\bsx') - \mu_{\text{MC}}(\bsx') \right),
    \label{eq:MCkernel}
\end{equation}
for $\bsx,\bsx'\in X_L$.
The high-fidelity discrepancy is modeled as a stationary GP with constant mean $\mu_d \in \mathbb{R}$:
\[
Y_d \sim \mathcal{GP}(\mu_d, \ker_d),
\quad
\ker_d(\bsx, \bsx')=\sigma_d^2
\exp\!\left(- \sum_{s=1}^p\frac{(x_s-x_s')^2}{2 \ell_s^2}\right).
\]
We consider the GP:
\[
Y_H = \rho Y_L + Y_d,
\]
leading to the log marginal likelihood
\begin{multline}
\ln L  = -\frac{1}{2} ( \boldsymbol{y}_H - \rho \mu_L(X_H) - \mu_d)^\top C_H^{-1} (\boldsymbol{y}_H - \rho \mu_L(X_H) - \mu_d) \\
  - \frac{1}{2} \ln |C_H| - \frac{N_H}{2} \ln 2\pi,
\label{eq:logL}
\end{multline}
where $C_H$ is the covariance matrix 
\[
(C_H)_{ij} = \rho^2 \ker_L( \bsx_{H,i}, \bsx_{H,j}) + \ker_d( \bsx_{H,i}, \bsx_{H,j}), \quad 1 \leq i, j \leq N_H.
\]

\begin{remark}
In \cite{Cokriging}, the regression parameter $\rho$ and the hyperparameters of the discrepancy process $Y_d = Y_H - \rho Y_L$ are estimated by maximizing the likelihood of $Y_d$. This requires observing a realization of $Y_d$ on $X_H$, and therefore access to a realization of $Y_L$ on $X_H$.
However, in our setting, the low-fidelity simulator is stochastic and provides multiple realizations. As a result, $Y_L$ is not deterministic on $X_H$, but is instead observed through a collection of samples. In the co-kriging framework of \cite{Cokriging}, this issue is typically addressed by considering the empirical mean $\mu_L(X_H)$ as a proxy for $Y_L(X_H)$.
In contrast, we adopt a different strategy to account for this stochasticity. Instead of working with a residual formulation, we consider the Gaussian process model
\[
Y_H = \rho Y_L + Y_d,
\]
which induces a Gaussian process with mean $\rho \mu_L + \mu_d$ and covariance $\rho^2 \ker_L + \ker_d$. The hyperparameters are then estimated by maximizing the corresponding marginal likelihood.
\end{remark}

\paragraph{Step 2. Smoothing of the LF kernel.}
Since \(\ker_L\) is empirical and defined only on \(X_L\), we replace it by a smooth kernel \(\ker_{\mathrm{opt}}\) obtained by solving
\[
\ker_{\mathrm{opt}}=\arg\min_{\ker\in S} 
\dist \bigl(\ker|_{X_L\times X_L},\, \ker_L\bigr),
\]
where \(S\) is a class of infinitely differentiable kernels and \( \dist \) is a matrix distance.

\paragraph{Step 3. Construction of a HF kernel and resolution of the PDE.}
The final multifidelity kernel is
\[
\ker_H^* = \rho^2 \ker_{\mathrm{opt}} + \ker_d.
\]

The RKHS \( \mathcal{H}(\ker_H^*) \) is then used to solve the PDE.

\begin{algorithm}[H]
\caption{{\MFker}}
\begin{algorithmic}[1]
\Require Low-fidelity realizations \( \boldsymbol{y}_L^i \), high-fidelity data \( \boldsymbol{y}_H \)
\Ensure Solution $u$ of the PDE 
\State Compute the empirical mean \( \mu_L \) and the empirical covariance \( \ker_L \) using~\eqref{eq:MCmean}--\eqref{eq:MCkernel}
\State Estimate \( \rho, \mu_d, \sigma_d, (\ell_s)_{s=1, \dots, p} \) by maximizing the log likelihood ~\eqref{eq:logL} 
\State Select a smooth kernel class \(S\) and a distance metric \(\dist\)
\State Compute \( \ker_{\mathrm{opt}} = \arg\min_{\ker\in S} \dist(\ker,\ker_L) \)
\State Set \(\ker =  \ker_H^* = \rho^2 \ker_{\mathrm{opt}} + \ker_d \)
\State Solve the RKHS minimization problem  \eqref{eq:Methodology1} in \( \mathcal{H}(\ker) \)
\State Return $u$
\end{algorithmic}
\end{algorithm}

\subsubsection*{{\MFmuker}: High-fidelity mean correction with multifidelity kernel learning}
In this methodology, we incorporate the effect of the mean $\mu_H = \rho \mu_L + \mu_d $ constructed by cokriging. However, since $\mu_H$ is defined only on the grid $X_L$, we construct a smooth extension $\widehat{\mu}_H$ using GPR.
Thus, we consider the model:
\[
y_H(\bsx) = \mu_H(\bsx) + \varepsilon(\bsx),
\qquad
\varepsilon \sim \mathcal{GP}(0, \ker_H^*).
\]
To estimate $\mu_H$, we perform GPR with an anisotropic Gaussian kernel
\[
\ker_\theta(\bsx, \bsx')
=
\sigma^2 \exp\!\left(
-\sum_{s=1}^p \frac{(x_s - x_s')^2}{2\ell_s^2}
\right),
\]
and a regularization parameter $\lambda$ (also known as ``nugget'').
The resulting estimator is
\[
\widehat{\mu}_H(\bsx)
=
\ker_\theta(\bsx, X_H)
\bigl(\ker_\theta(X_H, X_H) + \lambda I\bigr)^{-1}
\mu_H(X_H).
\]

The hyperparameters $(\theta, \lambda)$, with $\theta=(\sigma^2,l_s,s=1,\ldots,p)$, are obtained by maximizing the log-likelihood
\begin{equation} \label{eq:logLikGPRforMu}
\log L(\theta,\lambda)
=
-\frac12
\mu_H(X_H)^\top
\bigl(K_{\theta, \lambda} \bigr)^{-1}
\mu_H(X_H)
-\frac12 \log\!\bigl|\ker_{\theta, \lambda} \bigr|
-\frac{N_H}{2}\log(2\pi),
\end{equation}
with $ K_{\theta, \lambda} = \ker_\theta(X_H,X_H)+\lambda I $.

Finally, the model is:
\[
y_H(\bsx) = \widehat{\mu}_H(\bsx)
 + \varepsilon(\bsx),
\qquad
\varepsilon \sim \mathcal{GP}(0, \ker_H^*).
\]

The PDE is then solved in $\mathcal{H}(\ker_H^*)$ through the optimization problem:
\begin{equation}
\displaystyle \min_{h \in \mathcal{H}(\ker_H^*)} \|h\|_{\mathcal{H}(\ker_H^*)} \quad
\text{s.t.} \quad 
\begin{cases}
\mathcal{P}(h + \widehat{\mu}_H )(\bsx_{m}) = f(\bsx_{m}), & m = 1,\ldots, M_{\Omega},\\[2mm]
\mathcal{B}(h+ \widehat{\mu}_H) (\bsx_{m}) = g(\bsx_{m}), & m = M_{\Omega}+1,\ldots, M, \\[2mm]
h(\bsx_{H, i}) + \widehat{\mu}_H(\bsx_{H, i})= y_{H,i},  & i = 1, \ldots, N_H , 
\end{cases}
\label{eq:methodology23}
\end{equation}
and the final solution is $u = h + \widehat{\mu}_H$.

In {\MFmuker}, we follow the same kernel construction procedure as in {\MFker}, 
leading to the multifidelity kernel $\ker = \ker_H^{*}$. 
The difference lies in the final step: instead of solving problem~\eqref{eq:Methodology1}, 
we solve problem~\eqref{eq:methodology23}.

\begin{algorithm}[H]
\caption{{\MFmuker}}
\begin{algorithmic}[1]
\Require The vector $\mu_H:= \rho \mu_L + \mu_d$ and the function \( \ker_H^* \) from {\MFker}
\Ensure Corrected solution \( u = h + \widehat{\mu}_H \)
\State Estimate $\theta, \lambda$ by maximizing the log-likelihood \eqref{eq:logLikGPRforMu}
\State Estimate \( \widehat{\mu}_H \) by GPR on $\mu_H$
with kernel \(\ker_\theta\)
\State Form shifted operators:  
\(\mathcal{P}'(h)=\mathcal{P}(h+\widehat{\mu}_H)\),  
\(\mathcal{B}'(h)=\mathcal{B}(h+\widehat{\mu}_H)\)
\State Set $\ker = \ker_H^* $
\State Solve the minimization problem \eqref{eq:methodology23} in the RKHS \( \mathcal{H}(\ker) \)
\State Return \( u = h + \widehat{\mu}_H \)
\end{algorithmic}
\end{algorithm}

We note that {\MFker}  solves the PDE directly for \(u\), whereas {\MFmuker} solves for the residual \(u - \widehat{\mu}_H\) using shifted operators. Both approaches rely on the same RKHS with kernel
\[
\ker_H^{*} = \rho^2 \ker_{\mathrm{opt}} + \ker_d,
\]
and their main difficulty lies in constructing the smooth kernel \(\ker_{\mathrm{opt}}\) that approximates the empirical covariance \(\ker_L\) on \(X_L\). To avoid this approximation step, we propose a third methodology in which the kernel of the RKHS is learned directly from the high-fidelity data. In this setting, the multifidelity information is carried by the mean function only, while the covariance kernel is inferred from the high-fidelity residuals. This approach is effective when the number of high-fidelity observations is sufficiently large, unlike the cokriging-based methods ({\MFker} and {\MFmuker}), which remain applicable even with limited high-fidelity information.
\subsubsection*{{\MFmu}: Mean correction with direct high-fidelity kernel learning }

In the third methodology, we solve problem~\eqref{eq:methodology23}, but we abandon the multifidelity construction of the kernel $\ker$ and instead learn a smooth kernel directly from the high-fidelity residuals.

We consider
\[
r(X_H) = y_H(X_H) - \mu_H(X_H),
\qquad
r \sim \mathcal{GP}(0, \ker_H').
\]

We then construct the kernel $\ker_H'$ by choosing a class  of positive-definite kernels \( S' \) and maximizing the following log-likelihood: 
\[
\log L(\ker)
=
-\frac12\, r(X_H)^\top \ker(X_H,X_H)^{-1}\, r(X_H)
-\frac12 \log\!\bigl|\ker(X_H,X_H)\bigr|
-\frac{N_H}{2}\log(2\pi).
\]

The PDE is then solved in \( \mathcal{H}(\ker_H') \) using the same shifted-operator formulation as in {\MFmuker}.

\begin{algorithm}[H] 
\caption{ {\MFmu}}
\label{algo:MFmu}
\begin{algorithmic}[1]
\Require HF data \(\boldsymbol{y}_H\), HF mean \( \widehat{\mu}_H \) 
\Ensure Solution $u$ of the PDE
\State Compute residuals \( r(X_H) = y_H(X_H)-\mu_H(X_H)\) 
\State Select a smooth kernel class \( S' \) 
\State Maximum likelihood estimation: \( \ker_H' = \arg\max_{\ker\in S'} \log L(\ker) \) 
\State Set $\ker = \ker_H'$
\State Solve the minimization problem \eqref{eq:methodology23} in the RKHS \( \mathcal{H}(\ker) \)
\State Return \( u = h + \widehat{\mu}_H \)
\end{algorithmic} 
\end{algorithm}

\subsection{Tuning details on the smooth kernel construction of Step 2}
A simple and widely used choice for the class \( S \) is the family of anisotropic stationary kernels.  
Instead of restricting ourselves to the squared–exponential (Gaussian) kernel, we consider a broader class
that includes Gaussian, Matérn--\(3/2\), Matérn--\(5/2\), and Matérn--\(7/2\) kernels.  
This family may be written as
\[
S = \left\{
\ker \in \mathcal{F}(\mathbb{R}^p,\mathbb{R}):
\ker(\bsx, \bsx') = \sigma^2\, R^S\!\left(\|\bsx - \bsx'\|_{\theta}\right),
\ \sigma>0,\ \theta_s>0
\right\},
\]

where  $\|\bsx- \bsx'\|_{\theta} = \sqrt{\sum_{s=1}^{p}\frac{(x_s-x'_s)^2}{\theta_s^2}}$ and \(R^S: [0,\infty) \to \mathbb{R}\) is any stationary isotropic correlation function.  
Typical choices for \(R^S\) include:
\begin{align*}
\text{Gaussian:}\quad
& R^S(r) = \exp\!\left(-\frac{r^2}{2}\right), \\[1mm]
\text{Matérn--3/2:}\quad
& R^S(r) = \left(1 + \sqrt{3}\, r \right)\exp(-\sqrt{3}\, r), \\[1mm]
\text{Matérn--5/2:}\quad
& R^S(r) = \left(1 + \sqrt{5}\, r + \tfrac{5}{3}r^2\right)\exp(-\sqrt{5}\, r), \\[1mm]
\text{Matérn--7/2:}\quad
& R^S(r) = \left(1 + \sqrt{7}\, r + \tfrac{14}{5} r^2 + \tfrac{7}{15} r^3\right)\exp(-\sqrt{7}\, r).
\end{align*}

However, the stationarity of these kernels limits their expressiveness when the underlying phenomenon 
exhibits location-dependent behaviour.  
To address this limitation, we also consider a nonstationary extension of a generic stationary kernel 
\(R^S\), following the construction in \cite{NSkernel}.  
The resulting nonstationary correlation function is
\begin{equation}
R_{\Sigma}^{NS}(\bsx, \bsx') =
|\Sigma_\bsx|^{1/4} |\Sigma_{\bsx'}|^{1/4}
\left|\frac{\Sigma_\bsx + \Sigma_{\bsx'}}{2}\right|^{-1/2}
R^S\!\left(\sqrt{(\bsx - \bsx')^\top 
\left(\tfrac{\Sigma_\bsx + \Sigma_{\bsx'}}{2}\right)^{-1}
(\bsx - \bsx')}\right),
\label{eq:NS_gau_ke}
\end{equation}
where \(\Sigma_\bsx \) is a local covariance (or length-scale) matrix at input \( \bsx\).

Although expressive, this form can be overly parameterised, so we consider a simpler special case where
\[
\Sigma_\bsx = \mathrm{diag}\big(\theta_1^2 ,\ldots,\theta_p^2 \big),
\]
i.e.\ the anisotropy is stationary while the nonstationarity is introduced only through an input-dependent amplitude.

Specifically, we define the following class of modified stationary kernels:
\[
S_\sigma
=
\left\{
\ker \in \mathcal{F}(\mathbb{R}^p \times \mathbb{R}^p,\mathbb{R}):
\ker(\bsx, \bsx') = \sigma(\bsx)\,\sigma(\bsx')\,
R^S\!\left(\|\bsx - \bsx'\|_{\theta}\right),
\ \sigma(\bsx)>0,\ \theta>0
\right\}.
\]
Here, nonstationarity arises solely from the spatially varying amplitude \(\sigma(\bsx)\), which modulates the 
local variance of the kernel.  
In practice, \(\sigma\) may be learned from the empirical variability of the data, or through a parametric model.

Another way to introduce nonstationarity into the kernel consists in replacing the constant length-scale parameters by input-dependent functions. 
This corresponds to a particular case of the nonstationary correlation function \(R_{\Sigma}^{NS}\) defined in \eqref{eq:NS_gau_ke}, obtained by choosing
\[
\Sigma_\bsx = \mathrm{diag}\bigl(\ell_1^2(\bsx),\ldots,\ell_p^2(\bsx)\bigr),
\]
where each \(\ell_s: \mathbb{R}^p \to (0,\infty)\) is an input-dependent length-scale function.

This leads to the following class of kernels:

\[
S_\ell =
\left\{
\ker \in \mathcal{F}(\mathbb{R}^p \times \mathbb{R}^p,\mathbb{R}):
\ker(\bsx, \bsx') = \sigma^2\, R^{NS}_{\Sigma}(\bsx, \bsx'),
\ \sigma>0
\right\},
\]

where the nonstationary correlation function \(R_{\Sigma}^{NS}\) takes the explicit form
\[
R_{\Sigma}^{NS}(\bsx, \bsx') =
\prod_{s=1}^p
\sqrt{
\frac{2\, \ell_s(\bsx)\, \ell_s(\bsx')}
     {\ell_s^2(\bsx) + \ell_s^2(\bsx')}
}
\, R^S\!\left(
\sqrt{
\sum_{s=1}^p
\frac{(x_s - x'_s)^2}
     {\tfrac{1}{2}\left(\ell_s^2(\bsx) + \ell_s^2(\bsx')\right)}
}
\right).
\]

In this construction, nonstationarity arises from spatially varying local smoothness,
while the global variance parameter \( \sigma^2 \) remains constant.
In the particular case where \( R^S(r) = \exp(-r^2/2) \), i.e.\ when the base correlation
is Gaussian, the above expression reduces to the well-known Gibbs kernel \cite{gibbs1998bayesian,williams2006gaussian}.

More generally, nonstationarity may be introduced simultaneously through both mechanisms.
That is, one may consider kernels in which both the local variance and the local length-scales
are input-dependent. This leads to the following general class:
\[
S_{\sigma,\ell}
=
\left\{
\ker \in \mathcal{F}(\mathbb{R}^p \times \mathbb{R}^p,\mathbb{R}):
\ker(\bsx, \bsx') = \sigma(\bsx)\,\sigma(\bsx')\, R^{NS}_{\Sigma}(\bsx, \bsx'),
\ \sigma(\bsx)>0
\right\}.
\]

In this formulation, nonstationarity arises both from spatially varying amplitude
and from spatially varying local smoothness.
While this class is highly expressive, it also introduces a larger number of parameters,
which may increase the risk of overfitting and the computational cost of inference.
In practice, a trade-off between flexibility and identifiability is therefore considered.

As for the distance function \( \dist(\cdot, \cdot) \) used to compare covariance matrices, several options are available depending on the application and the properties we wish to emphasize. One natural and computationally convenient choice is the \emph{spectral norm distance}, defined by:
\[
 \dist_{\text{spec}}(K_1, K_2)  = \sigma_{\max}(K_1 - K_2) = \sqrt{\lambda_{\max}((K_1 - K_2)^\top(K_1 - K_2))},
\]
where \( \lambda_{\max} \) denotes the largest eigenvalue and \( \sigma_{\max} \) is the largest singular value of the matrix difference. This distance reflects the worst-case deviation between the two matrices in terms of their spectral content.

An alternative is the \emph{Procrustes size-and-shape distance} \cite{pigoli2014distances}, which takes into account structural similarity between matrices. It is defined as:
\[
\dist_{\text{proc}}(K_1, K_2) = \inf_{R \in \mathcal{O}(\mathbb{R})} \left\| K_1^{1/2} - R K_2^{1/2} \right\|,
\]
where \( \mathcal{O}(\mathbb{R}) \) denotes the set of orthogonal matrices and $K_i^{1/2} $ is the positive square root of the positive matrix \(K_i\), for \(i \in \{1,2\}\), which can be defined from the spectral decomposition $K_i = U_i D_i U_i^\top$ as
$K_i^{1/2} = U_i D_i^{1/2} U_i^\top$, with \(U_i \in \mathcal{O}_n(\mathbb{R})\) and \(D_i\) a diagonal matrix containing the eigenvalues of \(K_i\).
This metric aligns the square roots of the covariance matrices under optimal rotation and thus compares their geometry more directly.

However, numerically manipulating these distances can be difficult, as one may encounter significant numerical errors, especially when using an empirical covariance matrix. Additionally, these distances are computationally heavy and costly. For this reason, in what follows, we will use the \emph{Frobenius distance}, which is both simpler and more efficient to compute. It is defined as:

$$
 \dist_{\text{Frob}}(K_1, K_2) = \|K_1 - K_2\|_F = \sqrt{\sum_{i,j} (K_1^{(i,j)} - K_2^{(i,j)})^2}  .
$$
In addition to its computational advantages, we observe that the Frobenius distance produces consistently good results for our specific use case, as will be presented in the following section.

\section{Application to linearized Burgers' equation}
\label{sec:linburgers}
\subsection{Implementation of the methodologies}

We now illustrate the proposed multifidelity methodologies by considering a simplified version of Burgers' equation. More precisely, we study the linearized model
\begin{align}
\partial_{t} u + \alpha(x) u_0(x) \partial_{x} u - \nu(x) \partial_{x}^{2} u &= 0, \quad \forall (t, x) \in  (0,1] \times (-1,1)  , \\
u(0, x) &= -\sin (\pi x) ,\\
u(t, -1) &= u(t, 1) = 0 ,
\label{eq:burgersLinear}
\end{align}
where the parameter \(\nu(x) = 0.02\) controls the shock, \(\alpha(x) = 1 \) and the background state is fixed to the initial condition \(u_0(x) = -\sin(\pi x)\).
The high-resolution simulator provides the true solution on a uniform grid $X_H$ of size \(10 \times 10\).

To construct the low-resolution simulator, we assume that the parameters \(\alpha\) and \(\nu\) are constant but uncertain. Specifically, they are drawn from the uniform distributions
\[
\alpha \sim \mathcal{U}(0.8, 1.1),
\qquad
\nu \sim \mathcal{U}(0.015, 0.03).
\]
In the following experiment, we set \(\Nobs = 1000\). For each of these simulations generated by the low-resolution solver, we solve the PDE corresponding to the sampled values of \(\alpha\) and \(\nu\) using a finite-difference method on a uniform grid $X_L$ of size \(10 \times 20\).
We, then, apply and compare the three methodologies as follows, using \(M = 1201\) collocation points, including \(M_{\Omega} = 1000\) points in the interior domain.

\paragraph{{\MFker}: }
The empirical covariance \(\ker_L\) is approximated by a smooth kernel \(\ker_{\mathrm{opt}}\), using as admissible family \(S\) the class of anisotropic Gaussian kernels.  
The multifidelity kernel \(\ker_H^{*} = \rho^2 \ker_{\mathrm{opt}} + \ker_d\) is then obtained, and the PDE is solved following \textbf{Algorithm~1}.

\begin{figure}[H]
\centering
\includegraphics{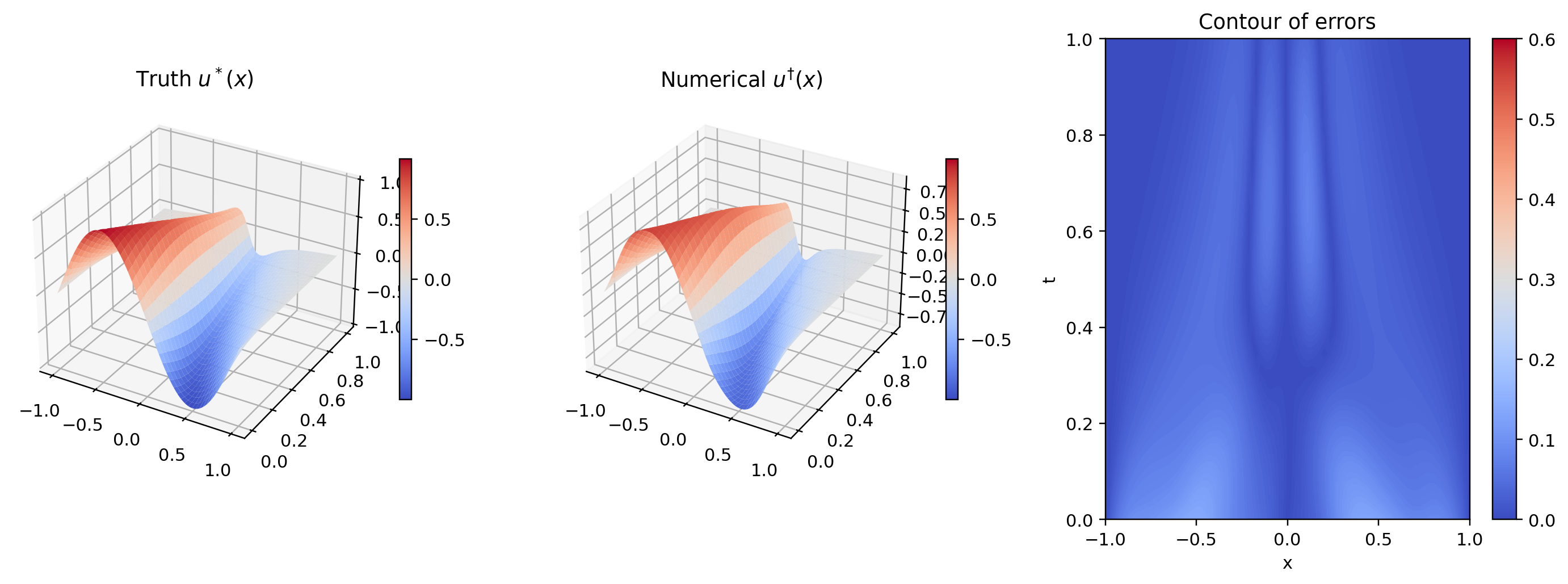}
\caption{{\MFker} result: $L_2$ error = 0.04 ; max error = 0.14 }
\label{fig:method1}
\end{figure}

The reconstructed solution in Figure~\ref{fig:method1}  captures the overall dynamics of the solution, although noticeable discrepancies remain in regions where the solution exhibits stronger gradients. This behavior is consistent with the error values reported in the caption.

\paragraph{{\MFmuker}:}
As in {\MFker}, the set \(S\) used to approximate \(\ker_L\) consists of anisotropic Gaussian kernels.  
Moreover, the high-fidelity mean \(\widehat{\mu}_H\) is computed using GPR with an anisotropic Gaussian kernel.  
Once \(\widehat{\mu}_H\) is constructed, the PDE is solved in the RKHS \(\mathcal{H}(\ker_H^{*})\) using shifted operators, following \textbf{Algorithm~2}.

\begin{figure}[H]
\centering
\includegraphics{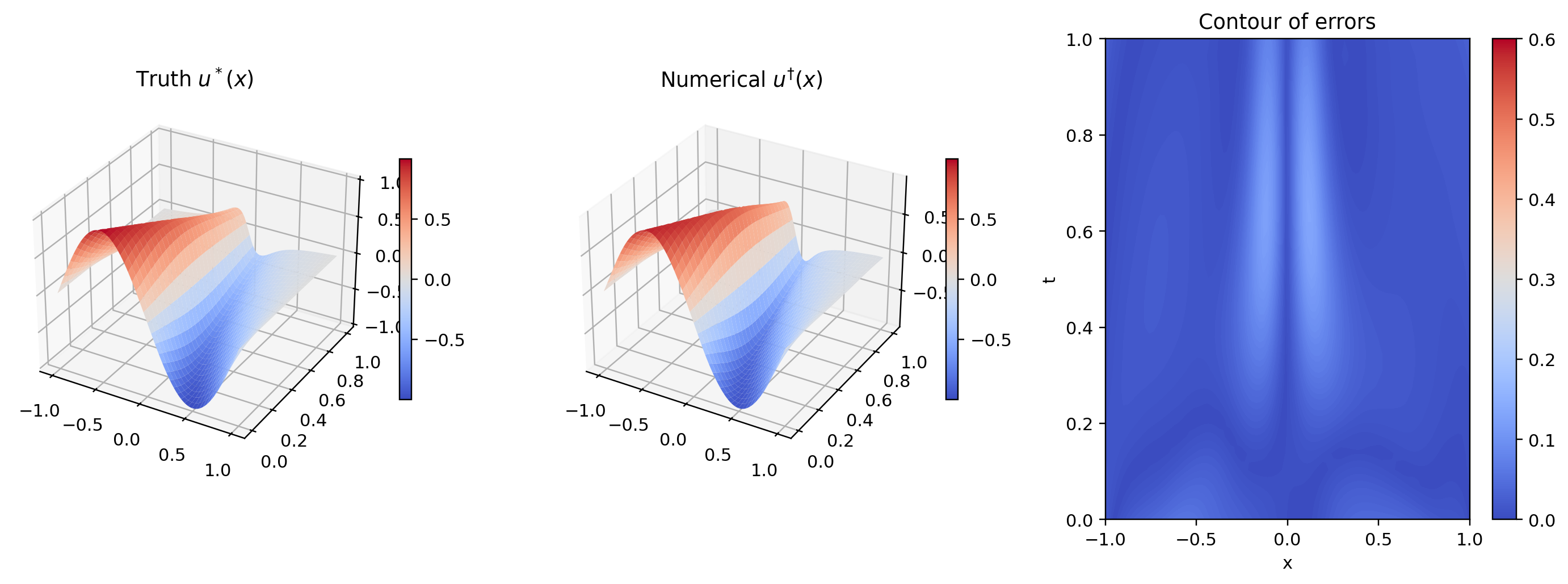}
\caption{{\MFmuker} result: $L_2$ error = 0.03 ; max error = 0.13 }
\label{fig:method2}
\end{figure}
The numerical solution in Figure~\ref{fig:method2}  is closer to the reference solution than in Figure~\ref{fig:method1}, and the error map indicates a slight reduction in the discrepancies. This highlights the benefit of including an explicit mean correction, in the case of the linearized Burgers' equation, in the RKHS formulation.

\paragraph{\MFmu:}
To avoid approximating \(\ker_L\), we learn the kernel directly from the high-fidelity data using a class \(S'\) of anisotropic Gaussian kernels.  
The high-fidelity mean \(\widehat{\mu}_H\) is again computed using an anisotropic Gaussian kernel.  
The PDE is then solved in the RKHS \(\mathcal{H}(\ker_H')\) following \textbf{Algorithm~3}.

\begin{figure}[H]
\centering
\includegraphics{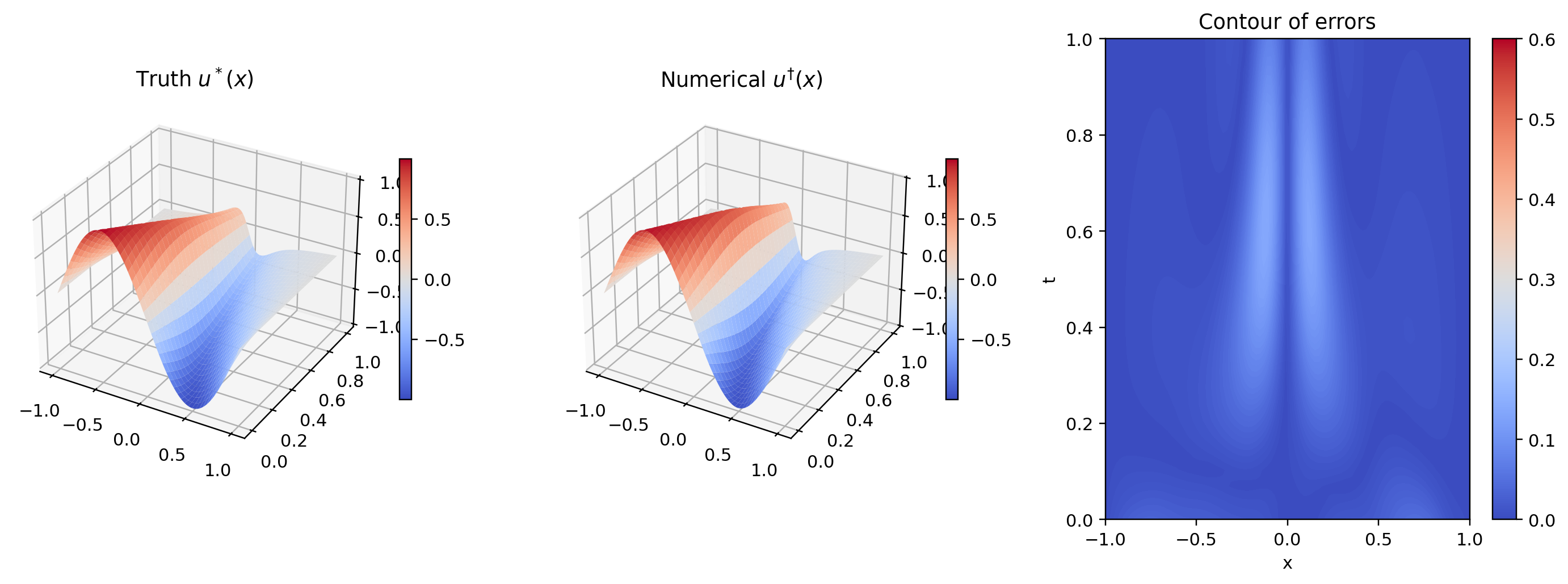}
\caption{{\MFmuker} result: $L_2$ error = 0.03 ; max error = 0.13}
\label{fig:method3}
\end{figure}

The reconstructed solution in Figure~\ref{fig:method3} closely matches the reference solution, with error levels comparable to those obtained with {\MFmuker}. This suggests that learning the kernel directly from high-fidelity residuals may provide a reliable alternative to multifidelity kernel construction.

\subsection{Discussion}
{\MFmuker} and {\MFmu} yield better results than {\MFker}, even though their optimisation problems 
are solved in the same RKHS.  
This behaviour can be demonstrated in a idealized theoretical setting.
In fact, when the mean of the high-fidelity Gaussian process coincides with the exact PDE solution, mean-corrected methodologies recover the solution exactly, whereas {\MFker} generally does not.\\

\begin{proposition}
Let $u$ denote the solution of the linearized Burgers' equation~\eqref{eq:burgersLinear}.
Let $\ker \in C^4(\mathbb{R}^2 \times \mathbb{R}^2)$ be a reproducing kernel, and let
$\{\bsx_m\}_{m=1}^M \subset [0,1]\times[-1,1]$ be a set of collocation points as defined above.
If the mean of the high-fidelity Gaussian process satisfies
\[
\mu_H = u ,
\]
then:
\begin{enumerate}
\item The solution $u^{\star\star}$ obtained using {\MFmuker} or {\MFmu} satisfies
\[
u^{\star\star} = u .
\]
\item Consequently,
\[
\|u^{\star\star} - u\| = 0 \le \|u^\star - u\|,
\]
where $u^\star$ denotes the solution obtained using {\MFker}.
\end{enumerate}
\end{proposition}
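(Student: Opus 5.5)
The plan is to show that $h\equiv 0$ is feasible for problem~\eqref{eq:methodology23} once $\widehat{\mu}_H$ is replaced by $\mu_H = u$. Since $0$ has the smallest possible RKHS norm, it is then the unique minimizer. Throughout, I interpret the hypothesis $\mu_H = u$ as saying that the smooth mean used in the shifted operators (the function $\widehat{\mu}_H$ of Algorithms~2 and~3) coincides with the exact solution $u$ on $\overline{\Omega}$. This is the idealized setting announced just before the statement. The argument never uses the specific kernel $\ker_H^*$ or $\ker_H'$, only that it is a reproducing kernel. Hence it treats {\MFmuker} and {\MFmu} simultaneously, with $\ker$ standing for either kernel.

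\textbf{Step 1 (feasibility of $h=0$).} I check each constraint of~\eqref{eq:methodology23} with $h=0$. Here $\mathcal{P}(h+\widehat{\mu}_H)=\mathcal{P}(u)$, where $\mathcal{P}(v)=\partial_t v + \alpha u_0\partial_x v - \nu\partial_x^2 v$ and $f=0$. Because $u$ is the strong solution of~\eqref{eq:burgersLinear}, we get $\mathcal{P}(u)(\bsx_m)=0$ at every interior collocation point. The boundary operator $\mathcal{B}$ encodes both the initial condition $u(0,x)=-\sin(\pi x)$ and the Dirichlet conditions $u(t,\pm1)=0$, so $\mathcal{B}(u)(\bsx_m)=g(\bsx_m)$ for $m>M_\Omega$. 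Finally, the high-fidelity data are the true solution on $X_H$, so $y_{H,i}=u(\bsx_{H,i})$ and the data constraints hold. All three families of constraints are therefore satisfied by $h=0$.

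\textbf{Step 2 (optimality and uniqueness).} We have $0\in\HS(\ker)$ and $\|h\|_{\HS(\ker)}\ge 0 = \|0\|_{\HS(\ker)}$. Moreover, any $h$ with norm zero is $h=0$, so $0$ is the unique minimizer. The $C^4$ assumption on $\ker$ is used only here and in Step~1: it guarantees that elements of $\HS(\ker)$ are $C^2$. The pointwise evaluations of $\partial_t h$, $\partial_x h$ and $\partial_x^2 h$ are then bounded linear functionals, so problem~\eqref{eq:methodology23} is well posed. We conclude that the returned solution is $u^{\star\star}=0+\widehat{\mu}_H=u$.

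\textbf{Step 3 (comparison).} Item~2 is immediate from item~1: $\|u^{\star\star}-u\|=0$, and any norm of $u^\star-u$ is nonnegative. It is worth remarking why the inequality is generally strict, although this is not required. In {\MFker} the minimizer of~\eqref{eq:Methodology1} is the minimum-norm interpolant of finitely many linear functionals. It lies in the span of the representers $L_m\ker(\cdot,\cdot)$, so it coincides with $u$ only if $u$ belongs to that finite-dimensional span.

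The main obstacle is not analytic but interpretive. One must make precise that the mean entering the shifted operators equals $u$ as a function on $\overline{\Omega}$, not merely on the grid $X_L$, and that the boundary operator $\mathcal{B}$ includes the initial-time condition. I would state these two conventions explicitly at the start of the proof, after which Steps~1--3 are routine.
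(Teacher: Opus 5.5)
Your proposal is correct and follows essentially the paper's argument: once the shift function equals $u$, the constraints of the corrected problem become homogeneous (equivalently, $h=0$ is feasible), so the minimum-norm correction is $h^{\star\star}=0$ and $u^{\star\star}=u$, after which item 2 is trivial. Your version differs only in two small ways, both in your favor. You verify feasibility of $h=0$ directly rather than via the reproducing-kernel form $\langle h, L^l \ker(\bsx_m,\cdot)\rangle = -L\mu_H(\bsx_m)$, so linearity of the PDE is not actually needed. You also check the high-fidelity interpolation constraints of~\eqref{eq:methodology23}, which the paper's reduced problem~\eqref{eq:Burgers_coll_corrected} silently omits.
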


\begin{proof}
Let $L$ be the linear operator defined by
\[
L u(t,x) = u_t(t,x) + \alpha\, u_0(t,x)\, u_x(t,x) - \nu\, u_{xx}(t,x),
\]

Following \cite{Reproducing_property}, the action of the lifted operator $L^l$ on $\ker$ is
\[
L^l \ker((t,x), \cdot)
= \frac{\partial \ker}{\partial t_1}((t,x), \cdot)
+ \alpha\, u_0(t,x)\, \frac{\partial \ker}{\partial x_1}((t,x), \cdot)
- \nu\, \frac{\partial^2 \ker}{\partial x_1^2}((t,x), \cdot).
\]

By \cite{Reproducing_property}, the reproducing property holds for $L$ in $\mathcal{H}(\ker)$, and the linearized Burgers' equation 
\eqref{eq:burgersLinear} can be written in reproducing-kernel form:
\begin{align}
\langle u, L^l \ker((t,x), \cdot) \rangle &= 0,
    && \forall (t,x) \in (0,1] \times (-1,1), \\
\langle u, \ker((0,x), \cdot) \rangle &= -\sin(\pi x),
    && \forall x \in (-1,1), \\
\langle u, \ker((t,-1), \cdot) \rangle &= \langle u, \ker((t,1), \cdot) \rangle = 0,
    && \forall t \in (0,1].
\label{eq:burgersLinear_RP}
\end{align}

Let $\{\bsx_m\}_{m=1}^M$ be collocation points in $[0,1] \times [-1,1]$, with
\[
\bsx_1,\ldots,\bsx_{M_\Omega} \in (0,1] \times (-1,1),
\qquad
\bsx_{M_\Omega+1},\ldots,\bsx_M \in  
([0,1] \times \{-1,1\}) \cup (\{0\} \times [-1,1]).
\]

Then {\MFker} solves:
\begin{equation}
\underset{u \in \mathcal{H}(\ker)}{\operatorname{minimize}} \, \|u\|_{\mathcal{H}(\ker)} \quad
\text{s.t.} \quad
\begin{cases}
\langle u, L^l \ker(\bsx_m, \cdot) \rangle_{\mathcal{H}(\ker)} = 0, & m = 1,\ldots, M_\Omega,\\[2mm]
\langle u, \ker(\bsx_m, \cdot) \rangle_{\mathcal{H}(\ker)} = y_m, & m = M_\Omega+1,\ldots, M ,
\end{cases}
\label{eq:Burgers_coll}
\end{equation}
where
\[
y_m = 
\begin{cases}
-\sin(\pi x_m), & \bsx_m \in \{0\} \times [-1,1], \\[1mm]
0, & \bsx_m \in [0,1] \times \{-1,1\}.
\end{cases}
\]

By the representer theorem \cite{argyriou2009there}, the minimizer has the form
\begin{equation}
u^\star(\cdot)
= \sum_{m=1}^{M_\Omega} c_m\, L^l \ker(\bsx_m, \cdot)
  + \sum_{m=M_\Omega+1}^{M} d_m\, \ker(\bsx_m, \cdot),
\label{eq:repr_form}
\end{equation}
where the coefficient vector $\alpha = (c^\top,d^\top)^\top$ is given, provided that the block matrix
\[
K=
\begin{pmatrix}
K_{LL} & K_{LB} \\
K_{BL} & K_{BB}
\end{pmatrix}
\]
is invertible, by
\[
\alpha = K^{-1}
\begin{pmatrix}
0 \\[1mm] y
\end{pmatrix}.
\]
The blocks of $K$ are defined by
\begin{itemize}
    \item 
\(
(K_{LL})_{ij}
=
L^l L^r \ker(\bsx_i,\bsx_j) \) for \(
\qquad 1\le i,j\le M_\Omega,
\)
\item
\(
(K_{LB})_{ij}
=
L^l \ker(\bsx_i,\bsx_{M_\Omega+j}),
\)
for $1\le i\le M_\Omega$ and $1\le j\le M-M_\Omega$,
\item
\(
(K_{BL})_{ij}
=
L^l \ker(\bsx_{M_\Omega+i},\bsx_j),
\)
for $1\le i\le M-M_\Omega$ and $1\le j\le M_\Omega$, and
\item
\(
(K_{BB})_{ij}
=
\ker(\bsx_{M_\Omega+i},\bsx_{M_\Omega+j}),
\)
for $1\le i,j\le M-M_\Omega$.
\end{itemize}
This yields the compact expression
\begin{equation}
u^\star(t,x) = g(t,x)^\top \alpha , 
\end{equation}
with \[
g(t,x)
=
\Big(
\{\, L^l \ker(\bsx_m,(t,x)) \,\}_{m=1}^{M_\Omega},
\;
\{\, \ker(\bsx_m,(t,x)) \,\}_{m=M_\Omega+1}^{M}
\Big)^\top .
\]
Consider now the case of mean correction ({\MFmuker} and {\MFmu}). Since the PDE is linear, one can instead solve for a correction term $h^{\star \star}$ such that
\[
u^{\star\star} = \mu_H + h^{\star \star}.
\]

The function $h^{\star \star}$ is obtained as the solution of the following problem:
\begin{equation}
\underset{h \in \mathcal{H}(\ker)}{\operatorname{minimize}} \, \|h\|_{\mathcal{H}(\ker)} \quad
\text{s.t.} \quad
\begin{cases}
\langle h, L^l \ker(\bsx_m, \cdot) \rangle_{\mathcal{H}(\ker)} = - L\mu_H(\bsx_m), & m = 1,\ldots, M_\Omega,\\[2mm]
\langle h, \ker(\bsx_m, \cdot) \rangle_{\mathcal{H}(\ker)} = y_m - \mu_H(\bsx_m), & m = M_\Omega+1,\ldots, M.
\end{cases}
\label{eq:Burgers_coll_corrected}
\end{equation}

Since $\mu_H = u$ and $u$ satisfies both the PDE and the boundary conditions, we have
\[
L\mu_H(\bsx_m) = (Lu)(\bsx_m) = 0,\quad m \le M_\Omega,
\qquad
\mu_H(\bsx_m) = u(\bsx_m) = y_m,\quad m > M_\Omega.
\]

Therefore, the constraints reduce to
\[
\langle h, L^l \ker(\bsx_m, \cdot) \rangle = 0,
\qquad
\langle h, \ker(\bsx_m, \cdot) \rangle = 0.
\]

By uniqueness of the minimum-norm solution in $\mathcal{H}(\ker)$, it follows that
$h^{\star \star} = 0$.
Hence,
$u^{\star\star} = \mu_H = u$,
and
$\|u^{\star\star} - u\| = 0
  \le 
\|u^\star - u\|
$.
\end{proof}
This shows that {\MFmuker} and {\MFmu}, in an idealized setting, can yield solutions that are never worse than those obtained using {\MFker}.
In contrast, the $L_2$ error of {\MFker}, $\|u^\star - u\|$, has no reason to be equal to zero.
Unless the kernel $\ker$, the randomly selected collocation set, and the regularity assumptions happen to be perfectly aligned with the true solution, 
the optimisation problem~\eqref{eq:Burgers_coll} generally does not recover $u$ exactly.
Hence, $\|u^\star - u\|$ is nonzero in general.\\

\begin{remark}

The above result is not specific to the linearized Burgers' equation and extends to any linear partial differential equation that can be formulated in reproducing-kernel form within an RKHS.
\end{remark}

\begin{remark}
Observe from the proof that, in the case of mean correction 
(Methodologies \MFmu~ and \MFmuker), the reconstruction error is exactly zero whenever $\mu_H = u$, independently of the choice of the reproducing kernel $\ker$.
In this situation, the correction term vanishes identically, and the choice of kernel parameters has no influence on the final solution.
\end{remark}

This result corresponds to an ideal situation in which the high-fidelity mean $\mu_H$ coincides exactly with the true solution $u$, i.e.\ when a very accurate low-fidelity simulator is available. In this case, mean-corrected approaches (Methodologies \MFmu~ and \MFmuker) are naturally superior. However, this setting is not representative of practical applications. In more realistic scenarios, as illustrated in the following sections, {\MFker} often provides more robust and accurate performance.

\section{Application to Burgers' equation}
\label{sec:burgers}
\subsection{The difficulty of choosing an initial kernel in a single-fidelity context}
We solve the following Burgers' equation using the methodology presented in \cite{Owhadi}:
\begin{align}
\partial_{t} u + \alpha(x) u \partial_{x} u - \nu(x) \partial_{x}^{2} u &= 0, \quad \forall (x, t) \in (-1,1) \times (0,1] , \\
u( 0, x) &= -\sin (\pi x) , \\
u(t, -1) &= u( t, 1) = 0 .
\label{eq:burgers}
\end{align}
The parameter \(\nu(x) = 0.02\) controls the shock, and \(\alpha(x) = 1 \).
To solve this PDE, we solve Problem \eqref{relaxation1} using the methodology presented in \cite{Owhadi}. For the collocation points, we uniformly select 1000 points from the interior domain $\Omega = (0, 1]\times  (-1, 1) $, and 201 points from the boundary $ \partial \Omega =  [0,1] \times \{-1,1\}  \cup \{0\} \times (-1,1)  $.

In what follows, the numerical results will be compared to the true solution, which is obtained using the Cole–Hopf transformation \cite{hopf1950partial} as :

\[
u(t,x)
=
-\frac{
\displaystyle \int_{\mathbb{R}}
\sin\!\left(\pi\left(x-\sqrt{4\nu t}\,z\right)\right)
\exp\!\left(
-\frac{\cos\!\left(\pi\left(x-\sqrt{4\nu t}\,z\right)\right)}{2\pi\nu}
\right)
e^{-z^2}\,dz
}{
\displaystyle \int_{\mathbb{R}}
\exp\!\left(
-\frac{\cos\!\left(\pi\left(x-\sqrt{4\nu t}\,z\right)\right)}{2\pi\nu}
\right)
e^{-z^2}\,dz
}.
\]

First, following \cite{Owhadi}, we consider a kernel of the form:
\[
\ker\left((t,x), (t',x')\right) = \exp\left(-\frac{(t - t')^2}{2\theta_1^2} - \frac{(x - x')^2}{2\theta_2^2}\right).
\]
Using the hyperparameters \( \theta_1 = 0.47\) and \( \theta_2 = 0.07 \),  as in \cite{Owhadi}, yields the reconstruction shown in Figure~\ref{fig:bestcase-error}.
\begin{figure}[H]
    \centering
    \includegraphics[width=1\linewidth]{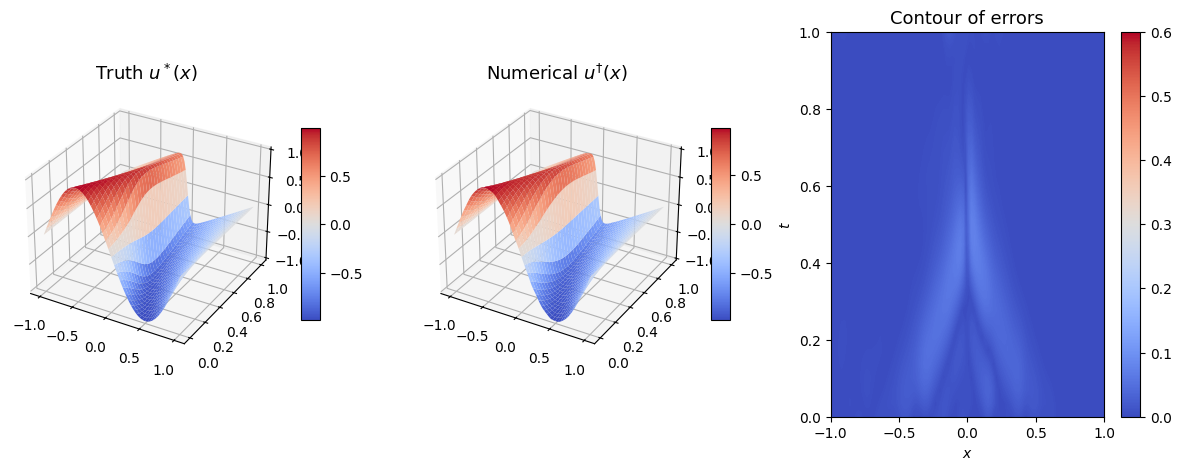}
    \caption{\( L_2 \) error = 0.01; max error = 0.09}
    \label{fig:bestcase-error}
\end{figure}

The numerical solution obtained in Figure~\ref{fig:bestcase-error} reproduces well the main structure of the Burgers dynamics, although the errors remain concentrated near the shock region. In this case, an appropriate manual choice of kernel parameters can lead to accurate solutions.

To assess the sensitivity to kernel parameter selection, we randomly sample the kernel hyperparameters \( \theta_1 \) and \( \theta_2 \) uniformly from the interval \( [0.01, 1] \times [0.01, 1] \). For each sampled parameter set, we apply the algorithm of \cite{Owhadi} and compute two error metrics over all simulations: the \(L_2\) error and the maximum (infinity-norm) error.

\begin{figure}
    \centering
    \includegraphics{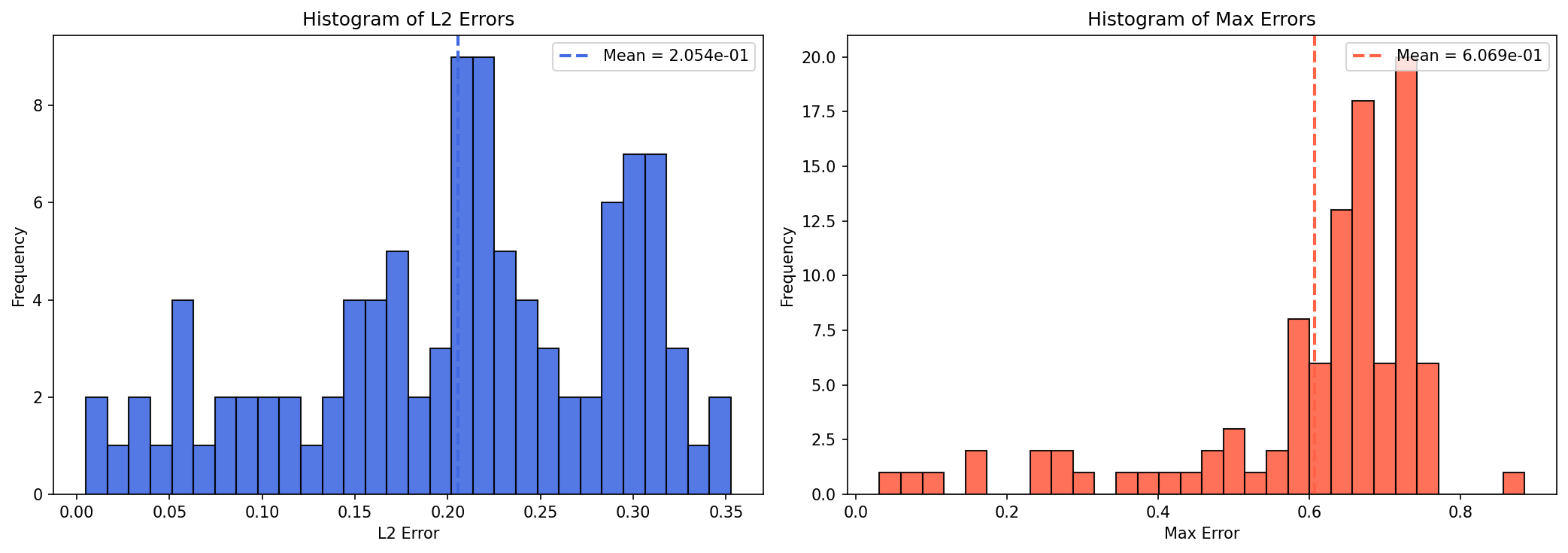}
    \caption{Histogram of errors of the random selection method}
    \label{fig:my_label}
\end{figure}

In Figure~\ref{fig:my_label}, we summarize the distribution of reconstruction errors obtained when the kernel parameters \( \theta_1 \) and \( \theta_2 \) are randomly sampled. The large variability observed in both the $L_2$ and maximum errors confirms the strong sensitivity of the method to kernel hyperparameters selection. This motivates the need for a kernel construction methodology. In the following subsection, we apply the multifidelity physics-informed framework for solving PDEs as presented in Section~\ref{Sec:PhI_methodology}.

\subsection{Improvements in a multifidelity context}

In this subsection, we return to the case of the Burgers' equation and consider a multifidelity setting in which two simulators are available: a high-resolution solver and a low-resolution one. The high-resolution simulator provides the reference solution on a uniform grid $X_H$ of size \(10 \times 10\).

To construct the low-resolution simulator, we assume that the parameters \(\alpha\) and \(\nu\) are constant but uncertain. Specifically, they are sampled from the uniform distributions
\[
\alpha \sim \mathcal{U}(0.8, 1.1), 
\qquad 
\nu \sim \mathcal{U}(0.015, 0.03).
\]
For each of the \(\Nobs\) realizations generated by the low-resolution model, we solve Burgers' equation using Fast Fourier transform \cite{seydaouglu2016numerical}, which provides a low-fidelity solution on a uniform grid $X_L$ of size \(10 \times 20\).

In all numerical experiments, we set $\Nobs = 1000$.
Among the different kernel classes introduced for the construction of the set $\mathcal{S}$, several choices are possible for solving the Burgers' equation in the multifidelity setting. 
In this subsection, we focus on the configuration that provides the best performance in our experiments, namely the \emph{non-stationary Gibbs kernel} combined with {\MFker}.
We consider the parametric form
\[
\ell_1(t,x) = \alpha_1 + \beta_1 t,
\qquad
\ell_2(t,x) = \alpha_2 + \beta_2 x,
\]
where the parameters
\(
\sigma^2, \alpha_1, \beta_1, \alpha_2, \beta_2
\)
are estimated from the data under the constraints
\(
\alpha_i > 0
\)
and
\(
\alpha_i + \beta_i > 0
\),
for \(i=1,2\).  

\begin{figure}[H]
\centering
\includegraphics{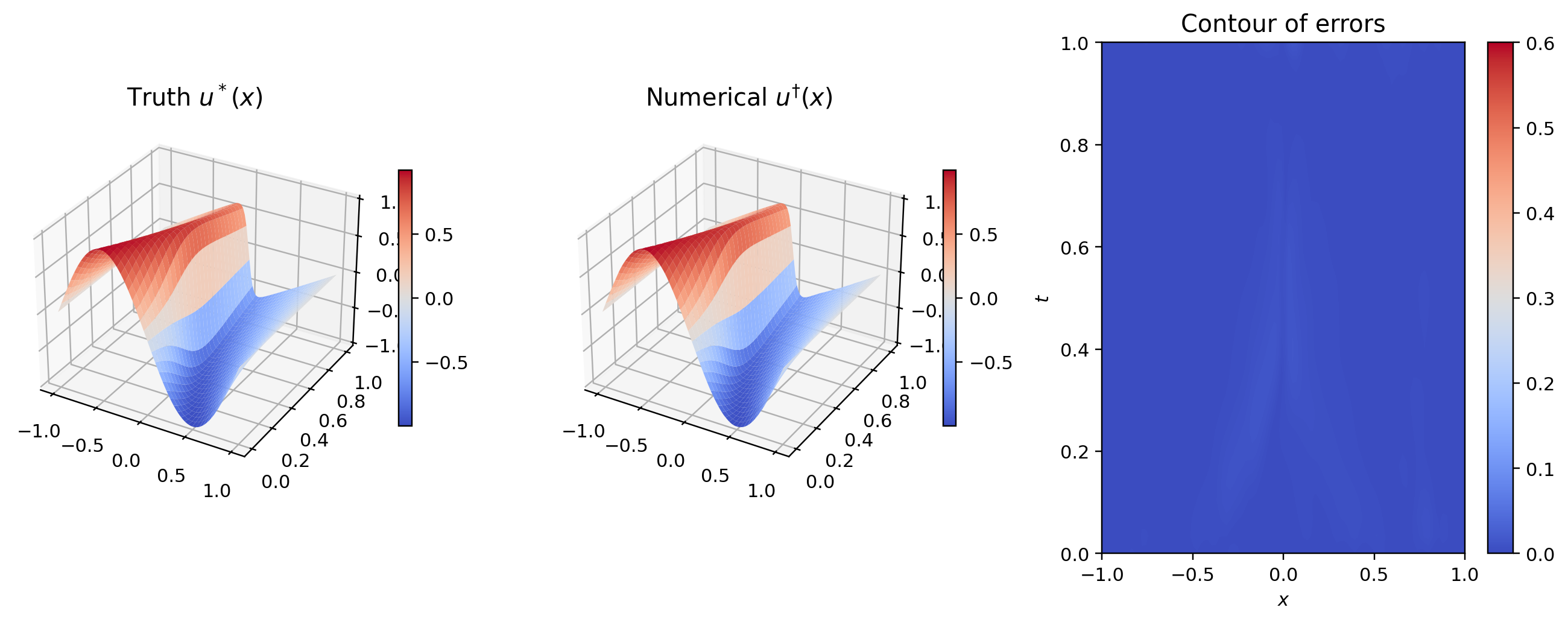}
\caption{{\MFker}  with non-stationary Gibbs kernel: 
$L_2$ error = 0.003; max error = 0.02.}
\label{fig:gibbs_method1}
\end{figure}

The result is presented in Figure~\ref{fig:gibbs_method1}. The numerical solution provides a satisfactory reconstruction of the reference field. The error map shows a clear reduction of the discrepancies compared with the previous kernel configurations. This suggests that introducing spatially varying length-scales, together with high-fidelity data regression, helps the kernel better adapt to the local dynamics of the solution.

Since the collocation points are randomly sampled from the interior and boundary domains, 
the reconstruction error may vary across realizations. 
To quantify this variability, we performed 80 independent realizations of the complete procedure.

The resulting errors can be summarized as
\[
L_2 = (4.20 \pm 0.77)\times 10^{-3},
\qquad
\|\cdot\|_{\infty} = (3.09 \pm 0.97)\times 10^{-2} .
\]
These results indicate that the proposed multifidelity approach is statistically stable, 
with limited variability induced by the random selection of collocation points.
This configuration yields the best accuracy among all tested models. 
The significant reduction in both the $L_2$ and maximum errors highlights the benefit of introducing spatially varying lengthscales, which allow the kernel to better adapt to the local dynamics of the solution.

\medskip

For the same non-stationary Gibbs kernel, we also applied Methodologies~\MFmu~ and~\MFmuker.  
In addition, we tested other kernel classes within $\mathcal{S}$, including:
\begin{itemize}
    \item stationary anisotropic Gaussian kernels,
    \item non-stationary Gaussian kernels with spatially varying variance.
\end{itemize}

For each of these configurations, the three methodologies were evaluated.  
The detailed numerical results are presented in Table~\ref{tab:burgers_kernel_comparison} and illustrated in Appendix~\ref{appendix:burgers_additional_results}.

\begin{table}[H]
\centering
\small
\begin{tabular}{|l||cc|cc|cc|}
\hline
\multirow{2}{*}{Kernel class $\mathcal{S}$} 
 & \multicolumn{2}{c|}{{\MFker}} 
 & \multicolumn{2}{c|}{{\MFmuker}} 
 & \multicolumn{2}{c|}{\MFmu} \\ \cline{2-7}
 & $L_2$  & Max  
 & $L_2$  & Max 
 & $L_2$  & Max \\ \hline \hline

Gibbs 
& $\underset{(0.77)}{\mathbf{4.20}}$ & $\underset{(0.97)}{\mathbf{3.09}}$
& $\underset{(0.77)}{\mathbf{4.19}}$ & $\underset{(0.95)}{\mathbf{3.08}}$
& $\underset{(0.081)}{57.7}$ & $\underset{(0.21)}{23.2}$ \\ \hline

Non-stationary Gaussian 
& $\underset{(0.70)}{4.53}$ & $\underset{(0.75)}{3.34}$
& $\underset{(0.71)}{4.52}$ & $\underset{(0.73)}{3.3}$
& $\underset{(0.050)}{\mathbf{17.7}}$ & $\underset{(0.17)}{\mathbf{9.9}}$ \\ \hline

Gaussian 
& $\underset{(0.68)}{4.49}$ & $\underset{(0.71)}{3.29}$
& $\underset{(0.68)}{4.48}$ & $\underset{(0.68)}{3.26}$
& $\underset{(0.034)}{59.8}$ & $\underset{(0.15)}{29.8}$ \\ \hline

\hline
\end{tabular}
\caption{
Comparison of the methodologies $\MFker$, $\MFmuker$, and $\MFmu$ across different kernel classes in $\mathcal{S}$ (stationary Gaussian, non-stationary Gaussian, and Gibbs). 
Results are reported as mean errors with standard deviations (in parentheses) over 80 runs. $L_2$ errors are scaled by $10^{-3}$ and Max errors by $10^{-2}$.
}
\label{tab:burgers_kernel_comparison}
\end{table}

Overall, the results show that {\MFker} and {\MFmuker} lead to very similar levels of accuracy when the kernel is learned from multifidelity information. In contrast, {\MFmu}, which uses multifidelity information only through the mean, is less robust and generally produces larger errors. This suggests that, in the nonlinear Burgers setting, the construction of the covariance kernel is the most influential step for obtaining an accurate solution. The comparison between kernel classes also shows that introducing nonstationarity improves the reconstruction, especially when local variations of the length-scales are taken into account, as in the Gibbs kernel.

\subsection{Sensitivity analysis with respect to the constraints}

In this subsection, we analyze the sensitivity of the solution with respect to the imposed constraints. 
Since the behavior is similar for the three methodologies, we restrict this study to {\MFker} with the Gaussian kernel.

Recall that {\MFker} consists in solving the constrained minimization problem
\eqref{eq:Methodology1}, which includes two types of constraints: 
(i) PDE constraints and boundary conditions, and 
(ii) interpolation constraints from high-fidelity data.
Within this framework, we obtained a $L^2$ error of $(4.49 \pm 0.68 ) \times 10^{-3}$ 
 and a maximum error of $(3.29 \pm 0.71) \times 10^{-2} $ (see Appendix~\ref{app:BurgersStatGaussian}, Figure~\ref{fig:gaussian_method1_appendix}). We now remove one of these two constraints and compare the accuracy with that reference.

\bigskip
\noindent
\textbf{Case 1: Removing the interpolation constraints.}

We first remove the interpolation constraints and solve only the physics-constrained problem:
\begin{equation}
\underset{u \in \HS(\ker)}{\operatorname{minimize}} \; \|u\|_{\HS(\ker)}
\quad \text{s.t.} \quad
\begin{cases}
\mathcal{P}(u)(\bsx_{m}) = f(\bsx_{m}), & m = 1,\ldots, M_{\Omega},\\[2mm]
\mathcal{B}(u)(\bsx_{m}) = g(\bsx_{m}), & m = M_{\Omega}+1,\ldots, M.
\end{cases}
\label{eq:Methodology1_no_interp}
\end{equation}

The corresponding reconstruction is shown in Figure~\ref{fig:sensitivity_no_interp}.

\begin{figure}[H]
\centering
\includegraphics{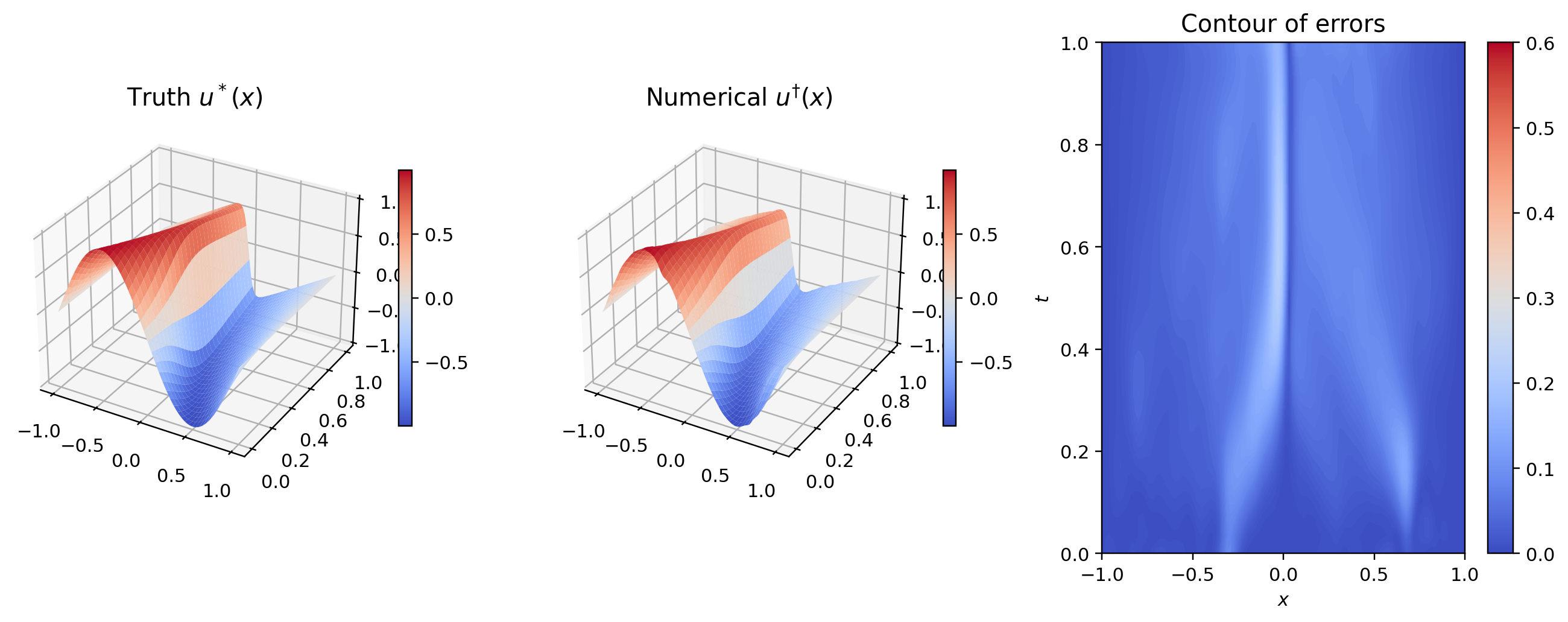}
\caption{Solution obtained without interpolation constraints. 
$L_2$ error = 0.06 ; max error = 0.22 .}
\label{fig:sensitivity_no_interp}
\end{figure}

Although the global structure of the solution in Figure~\ref{fig:sensitivity_no_interp} is recovered, the reconstruction exhibits larger deviations from the reference solution, indicating the benefit of including high-fidelity observations.

Over the 80 realizations performed with different random samplings of collocation points, 
the errors obtained in this configuration are
\[
L_2 = (3.09 \pm 2.00)\times 10^{-2},
\qquad
\|\cdot\|_{\infty} = (1.94 \pm 1.29)\times 10^{-1} .
\]

\bigskip
\noindent
\textbf{Case 2: Removing the PDE constraints.}

We now remove the PDE constraints and retain only the interpolation constraints, leading to the classical RKHS interpolation problem:
\begin{equation}
\underset{u \in \HS(\ker)}{\operatorname{minimize}} \; \|u\|_{\HS(\ker)} \quad
\text{s.t.} \quad 
u(\bsx_{H,i}) = y_{H,i}, \quad i = 1,\ldots, N_H.
\label{eq:Methodology1_no_pde}
\end{equation}

The corresponding reconstruction is shown in Figure~\ref{fig:sensitivity_no_pde}.

\begin{figure}[H]
\centering
\includegraphics{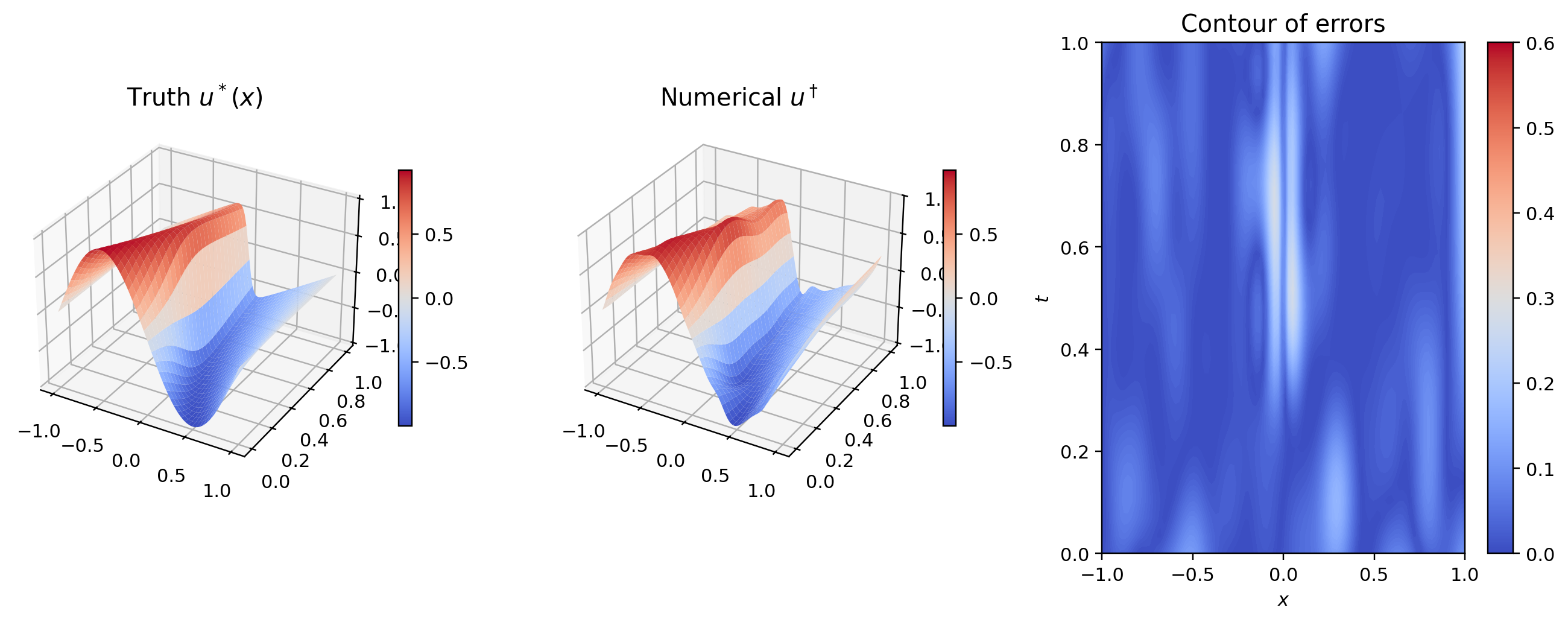}
\caption{Solution obtained with interpolation constraints only. 
$L_2$ error = 0.05 ; max error = 0.28 .}
\label{fig:sensitivity_no_pde}
\end{figure}
In this configuration, the solution interpolates the available high-fidelity observations but does not fully satisfy the PDE dynamics throughout the domain. As a consequence, larger errors appear away from the observation points.

Overall, the numerical experiments indicate that both interpolation constraints and PDE constraints provide meaningful information about the solution, even when enforced independently. Interpolation alone yields a reasonable approximation on the high-fidelity grid, while PDE constraints alone ensure physical consistency throughout the domain. However, the best accuracy and stability are obtained when both sources of information are enforced simultaneously. This confirms that combining deterministic high-fidelity data with PDE constraints enforced at randomly sampled collocation points within the proposed multifidelity physics-informed framework leads to a more reliable and accurate reconstruction, as each type of constraint compensates for the limitations of the other.

\subsection{Burgers' equation with spatially varying convection coefficient}
\label{subsec:burgers_alpha_var}

We now consider a configuration with spatially varying convection. 
We solve
\begin{align}
\partial_{t} u + \alpha(x)\, u\, \partial_{x}u - \nu\, \partial_{x}^{2}u &= 0,
\quad \forall (t,x)\in (0,1] \times (-1,1), \label{eq:burgers_alpha_var}\\
u(0,x) &= -\sin(\pi x), \nonumber\\
u(t,-1) &= u(t,1)=0 , \nonumber
\end{align}
with
\[
\alpha(x) = 1 + 0.2\,\sin(\pi x),
\qquad
\nu = 0.02.
\]
In this setting, the convection strength varies spatially, introducing additional non-stationarity in the dynamics and modifying the local propagation speed of the solution.

\paragraph{Low-fidelity simulator:}
The low-fidelity simulator is constructed as in Section~\ref{sec:burgers}. 
We assume spatially constant but uncertain parameters
\[
\alpha \sim \mathcal{U}(0.8,1.1),
\qquad
\nu \sim \mathcal{U}(0.01,0.03),
\]
centered around the reference value \(\nu=0.02\). 
For each realization, we solve \eqref{eq:burgers_alpha_var} using an FFT-based scheme \cite{seydaouglu2016numerical}, and use these simulations to learn the low-fidelity kernel.

\paragraph{Gibbs kernel results ({\MFker}):}
We apply {\MFker} using the non-stationary Gibbs kernel. The reconstruction is shown in Figure~\ref{fig:alpha_var_gibbs_m1}.

\begin{figure}[H]
\centering
\includegraphics{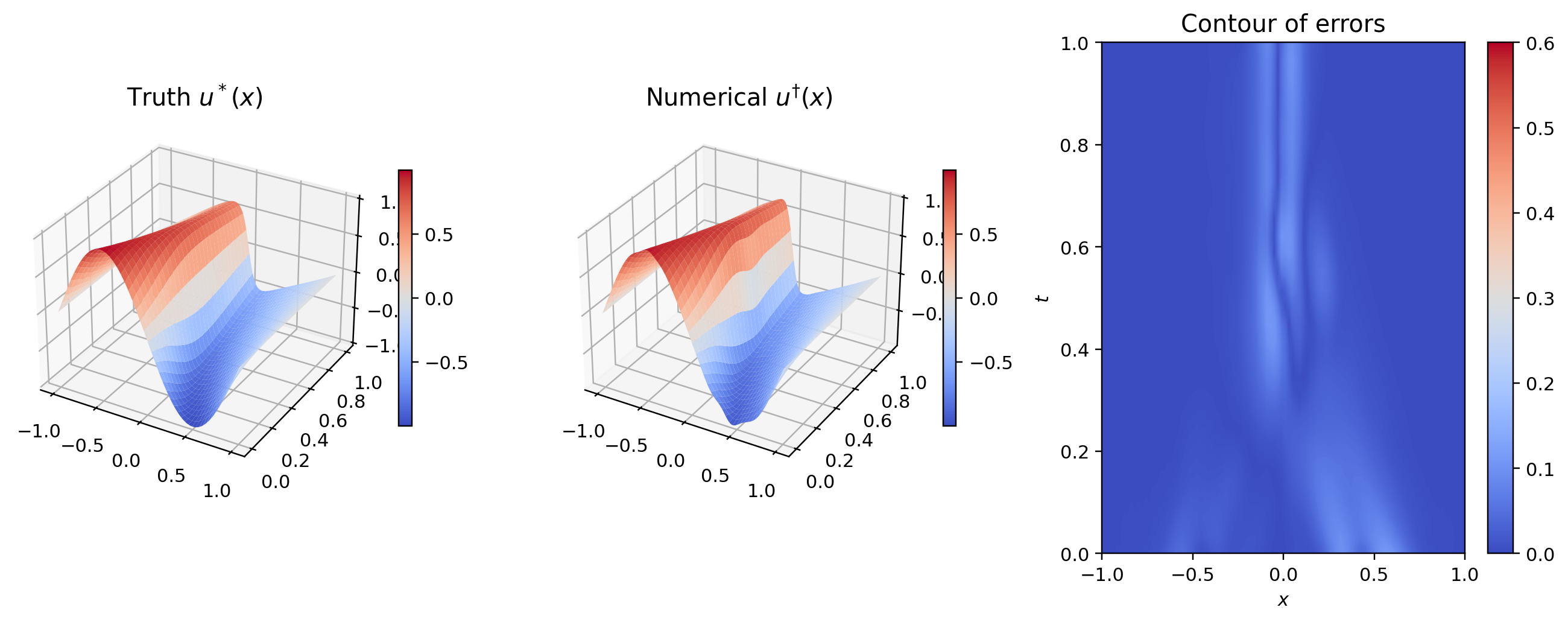}
\caption{Spatially varying \(\alpha(x)=1+0.2\sin(\pi x)\) -- {\MFker} (Gibbs kernel): $L_2$ error = 0.02; max error = 0.10.}
\label{fig:alpha_var_gibbs_m1}
\end{figure}

Despite the structural discrepancy between the low- and high-fidelity models, the multifidelity framework remains capable of capturing the main features of the solution. The error levels in Figure~\ref{fig:alpha_var_gibbs_m1} remain moderate, demonstrating the robustness of the proposed approach.

In all previously considered configurations, the low- and high-fidelity simulators belonged to the same model family: both solved Burgers' equation with constant coefficients, the discrepancy arising only from parameter uncertainty and numerical resolution. 
In contrast, the present configuration introduces a structural discrepancy between the two simulators. The high-fidelity solution corresponds to Burgers' equation with a spatially varying convection coefficient $\alpha(x)$, whereas the low-fidelity simulator still solves a constant-coefficient Burgers' equation with uncertain scalar parameters. 
Consequently, the mismatch between fidelities is no longer purely parametric but also structural. This setting therefore provides a more stringent test of the proposed multifidelity framework, as the cokriging model must compensate not only for numerical and parametric differences, but also for a genuine model-form discrepancy.
In this case, over 80 independent realizations, we obtain
\[
L_2 = (3.40 \pm 0.33)\times 10^{-2},
\qquad
\|\cdot\|_{\infty} = (2.11 \pm 0.24)\times 10^{-1}.
\]

\section{Conclusion and perspectives}
\label{sec:conclusion}
In this paper we have proposed a multifidelity physics-informed Gaussian process framework for solving nonlinear partial differential equations, where the solution is obtained as the Maximum A Posteriori estimator of a Gaussian process conditioned on both the physical constraints and the available data. A main idea of the approach is the construction of the prior in a multifidelity setting. Instead of fixing the covariance kernel in advance, we learn it from the available information. We first estimate an empirical covariance from low-fidelity simulations, regularize it into a smooth kernel. Then, we combine it with high-fidelity information within a multifidelity framework. Following the same philosophy, we may also construct the mean of the high-fidelity Gaussian process from the multifidelity information. This leads to three variants of the method: \textit{\MFker} incorporates multifidelity information through the kernel only; \textit{\MFmuker} combines multifidelity information through both the kernel and the mean; and \textit{\MFmu} uses multifidelity information through the mean only, while learning the kernel directly from high-fidelity residuals. In all cases, the resulting prior components are smooth and compatible with the PDE-constrained RKHS setting.

Our numerical results indicate that incorporating an explicit high-fidelity mean correction does not always lead to a significant improvement in the reconstruction. In practice, {\MFker} seems to provide the best compromise between accuracy, robustness, and simplicity. More generally, the experiments highlight that kernel learning plays an important role once strong physical constraints are enforced. They also confirm that combining PDE constraints with high-fidelity observations is essential: each source of information contributes complementary features, and their joint use leads to more accurate reconstruction than relying on either one alone.

A natural continuation of this work is the study of systems of PDEs, where several quantities are coupled. In that setting, extending the approach to multifidelity multi-output Gaussian processes would allow one to model both the interactions between variables and the different levels of fidelity in a unified way. This would make the framework applicable to more realistic problems, where complexity arises both from the physics and the structure of the available data.

\bibliographystyle{abbrv}
\bibliography{biblio}

\appendix

\section{Additional numerical results for the Burgers' equation}
\label{appendix:burgers_additional_results}

In this appendix, we report additional numerical results obtained for the Burgers' equation in the multifidelity setting.
As discussed in Section~\ref{sec:burgers}, several kernel classes were considered for the construction of the set $\mathcal{S}$, and each configuration was evaluated using Methodologies~\MFker, \MFmu~ and~\MFmuker.
The main text presented the best-performing configuration (non-stationary Gibbs kernel with {\MFker}).  
For completeness and reproducibility, we provide here the remaining results.

\subsection{Non-stationary Gibbs kernel}

We first report the results obtained with the non-stationary Gibbs kernel when combined with {\MFmuker} and {\MFmu}.

\begin{figure}[H]
\centering
\includegraphics{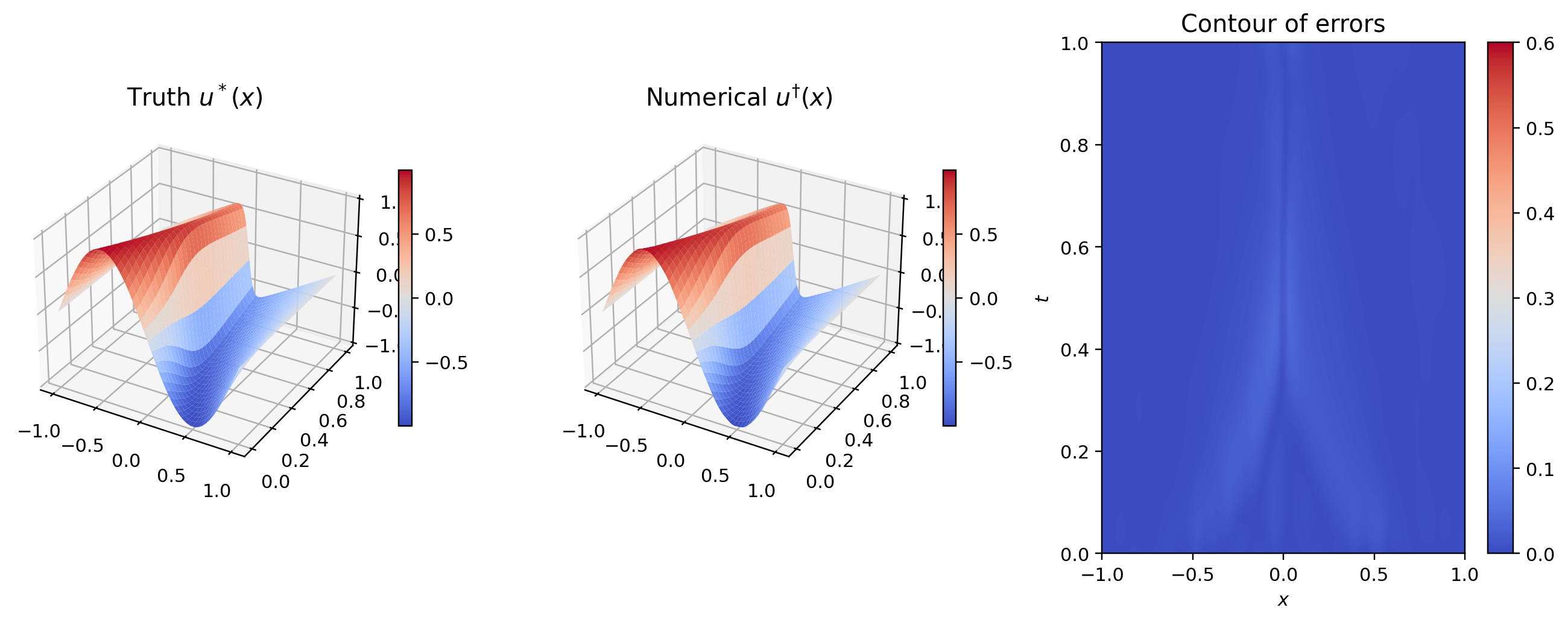}
\caption{Non-stationary Gibbs kernel -- {\MFmuker}: 
$L_2$ error = 0.008; max error = 0.03.}
\label{fig:gibbs_method2_appendix}
\end{figure}

\begin{figure}[H]
\centering
\includegraphics{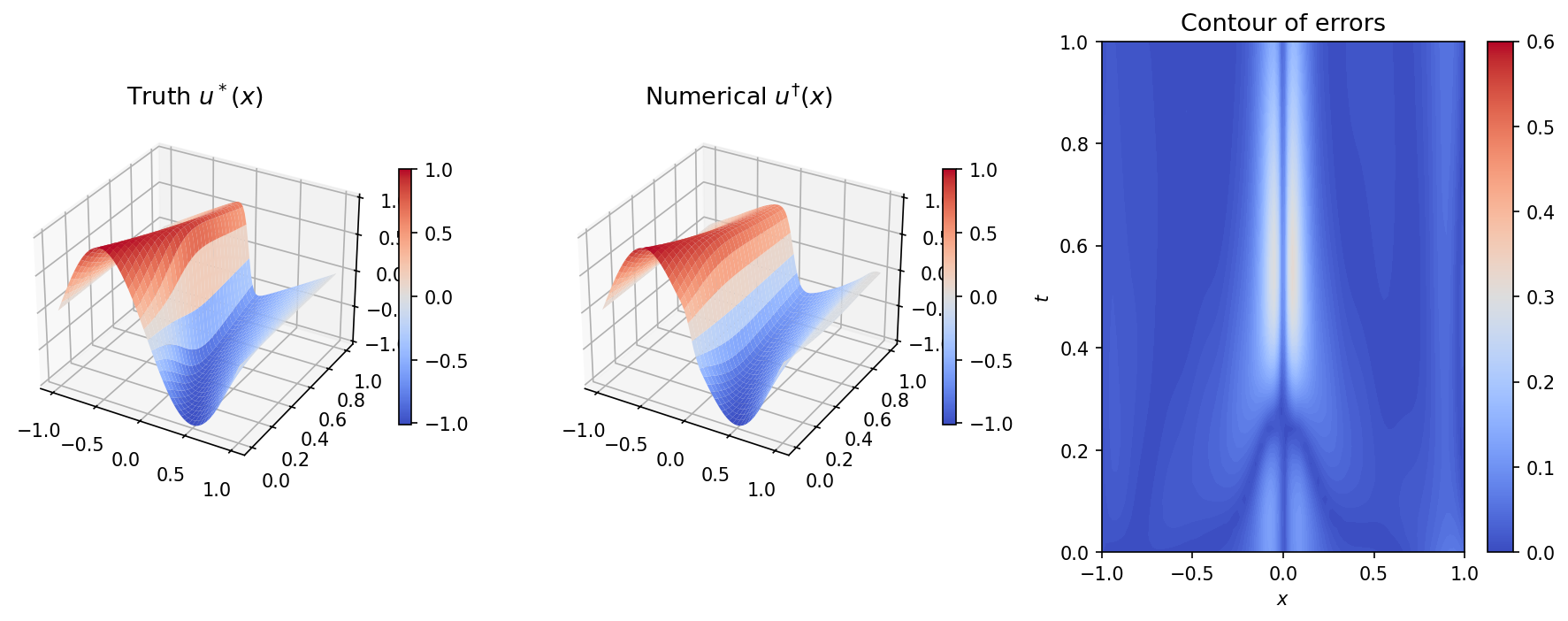}
\caption{Non-stationary Gibbs kernel -- {\MFmu} : 
$L_2$ error = 0.06; max error = 0.31.}
\label{fig:gibbs_method3_appendix}
\end{figure}

Over the 80 independent realizations, the errors obtained with {\MFmuker} are
\[
L_2 = (4.19 \pm 0.77)\times 10^{-3},
\qquad
\|\cdot\|_{\infty} = (3.08 \pm 0.95)\times 10^{-2}.
\]
For {\MFmu}, the corresponding errors are
\[
L_2 = (5.77 \pm 0.0081)\times 10^{-2},
\qquad
\|\cdot\|_{\infty} = (2.32 \pm 0.021)\times 10^{-1}.
\]

\subsection{Stationary anisotropic Gaussian kernel}
\label{app:BurgersStatGaussian}

We now consider the class of stationary anisotropic Gaussian kernels of the form

\[
\ker\left((t,x),(t',x')\right)
=
\exp\!\left(
-\frac{(t-t')^2}{2\theta_1^2}
-\frac{(x-x')^2}{2\theta_2^2}
\right),
\]

with hyperparameters estimated by maximum likelihood.

\begin{figure}[H]
\centering
\includegraphics{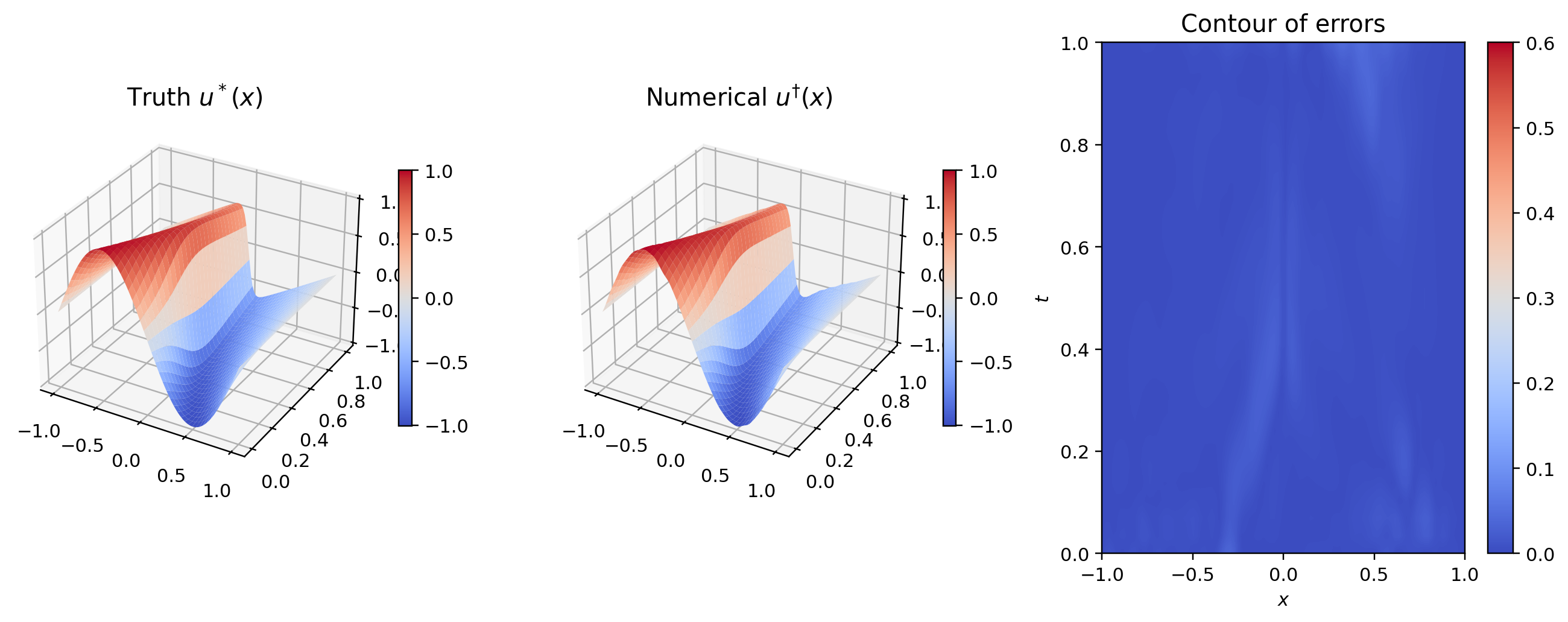}
\caption{Stationary Gaussian kernel -- {\MFker} ({\MFker}):
$L_2$ error = 0.008; max error = 0.04.}
\label{fig:gaussian_method1_appendix}
\end{figure}

\begin{figure}[H]
\centering
\includegraphics{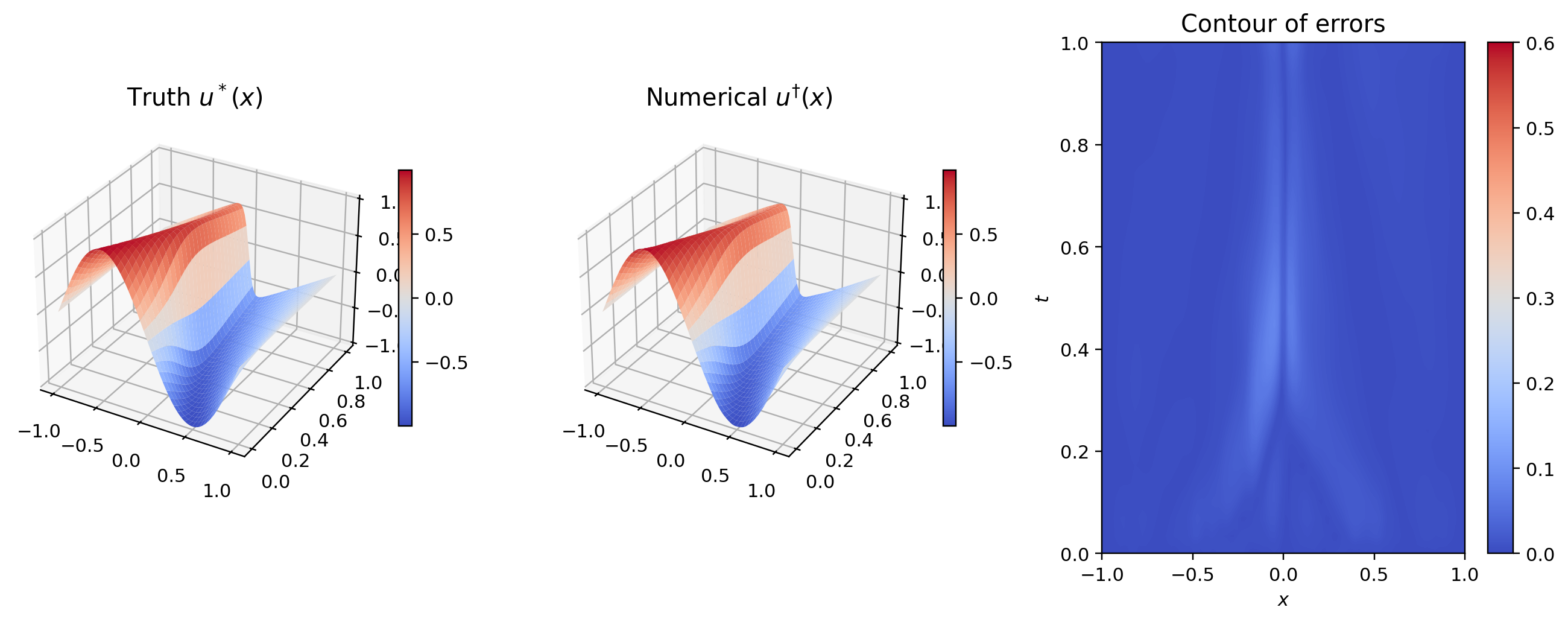}
\caption{Stationary Gaussian kernel -- {\MFmuker}: 
$L_2$ error = 0.01; max error = 0.08.}
\label{fig:gaussian_method2_appendix}
\end{figure}

\begin{figure}[H]
\centering
\includegraphics{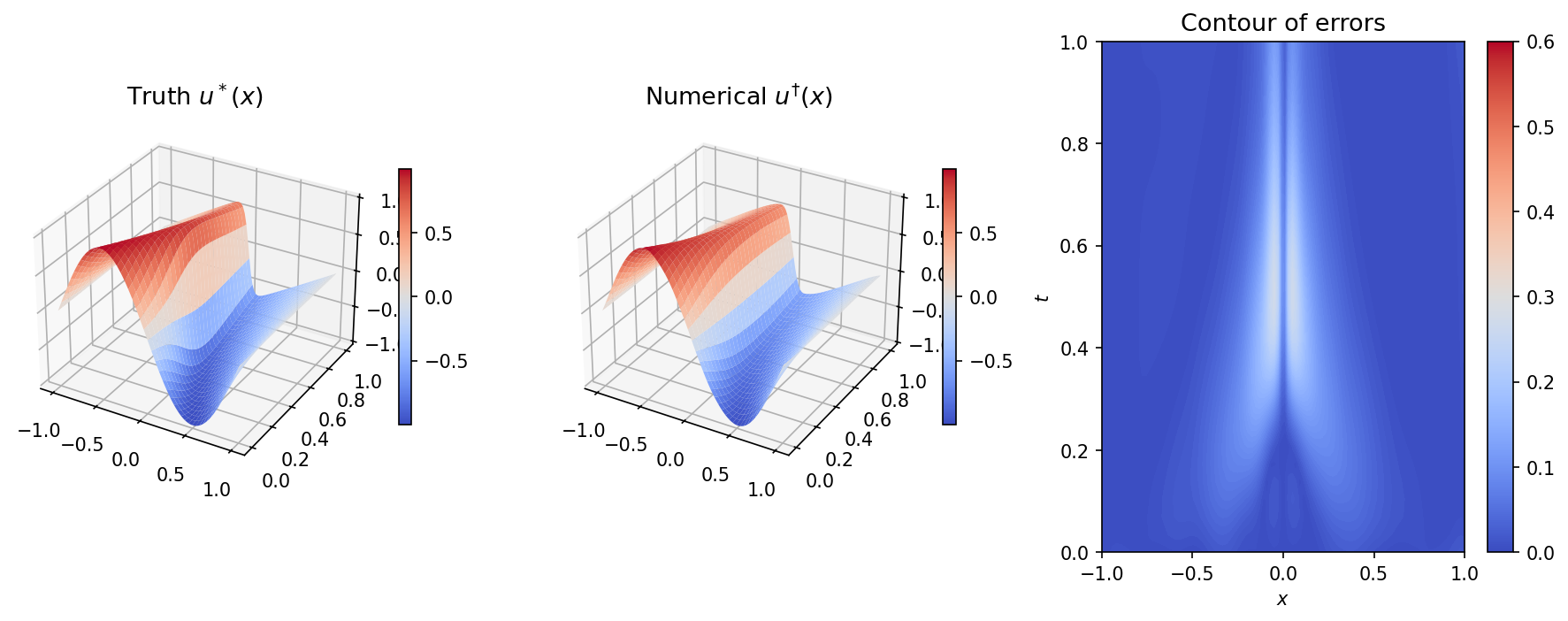}
\caption{Stationary Gaussian kernel -- {\MFmu}: 
$L_2$ error = 0.05; max error = 0.27.}
\label{fig:gaussian_method3_appendix}
\end{figure}

Over the 80 independent realizations, the errors obtained with {\MFker} are
\[
L_2 = (4.49 \pm 0.68)\times 10^{-3},
\qquad
\|\cdot\|_{\infty} = (3.29 \pm 0.71)\times 10^{-2}.
\]
For {\MFmuker}, the errors are
\[
L_2 = (4.48 \pm 0.68)\times 10^{-3},
\qquad
\|\cdot\|_{\infty} = (3.26 \pm 0.68)\times 10^{-2}.
\]
For {\MFmu}, the errors are
\[
L_2 = (5.98 \pm 0.0034)\times 10^{-2},
\qquad
\|\cdot\|_{\infty} = (2.98 \pm 0.015)\times 10^{-1}.
\]
\subsection{Non-stationary Gaussian kernel}

We now consider the non-stationary Gaussian kernel introduced in 
Section~\ref{subsec:nonstationary-kernel}. 
In this case, non-stationarity is introduced through a spatially varying 
standard deviation function $\sigma(x_1,x_2)$.

The choice of $\sigma$ is motivated by the empirical behavior of the variance 
observed from the low-fidelity simulations. 
For the Burgers' equation, the variance is symmetric with respect to $x=0$. 
For this reason, we restrict the visualization to $[0,1]\times[0,1]$.

\begin{figure}[H]
    \centering
    \includegraphics[width=1\linewidth]{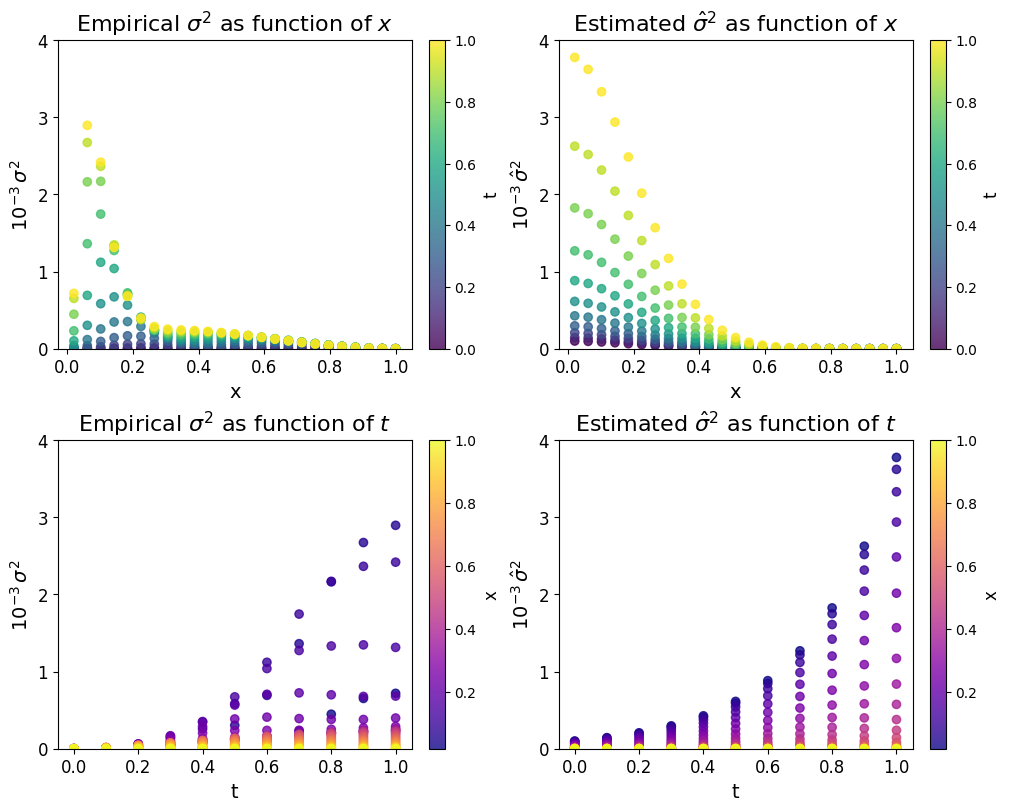}
    \caption{Comparison between empirical and estimated variances.}
    \label{fig:empirical_sigma_appendix}
\end{figure}

From these empirical observations, we note that the standard deviation 
approaches zero near $x=-1,0,1$ and $t=0$. 
To avoid instability in these regions, we introduce the following smooth 
parametric form:

\[
\sigma(x_1,x_2)
=
\exp\!\left(
\beta_0
+
\beta_1 x_2
+
\beta_2 x_2^2
+
\beta_3 x_1
\right),
\]
where the parameters $\beta_0,\beta_1,\beta_2,\beta_3$ are estimated by 
maximum likelihood.
We then apply the three methodologies using this non-stationary Gaussian kernel.

\begin{figure}[H]
\centering
\includegraphics{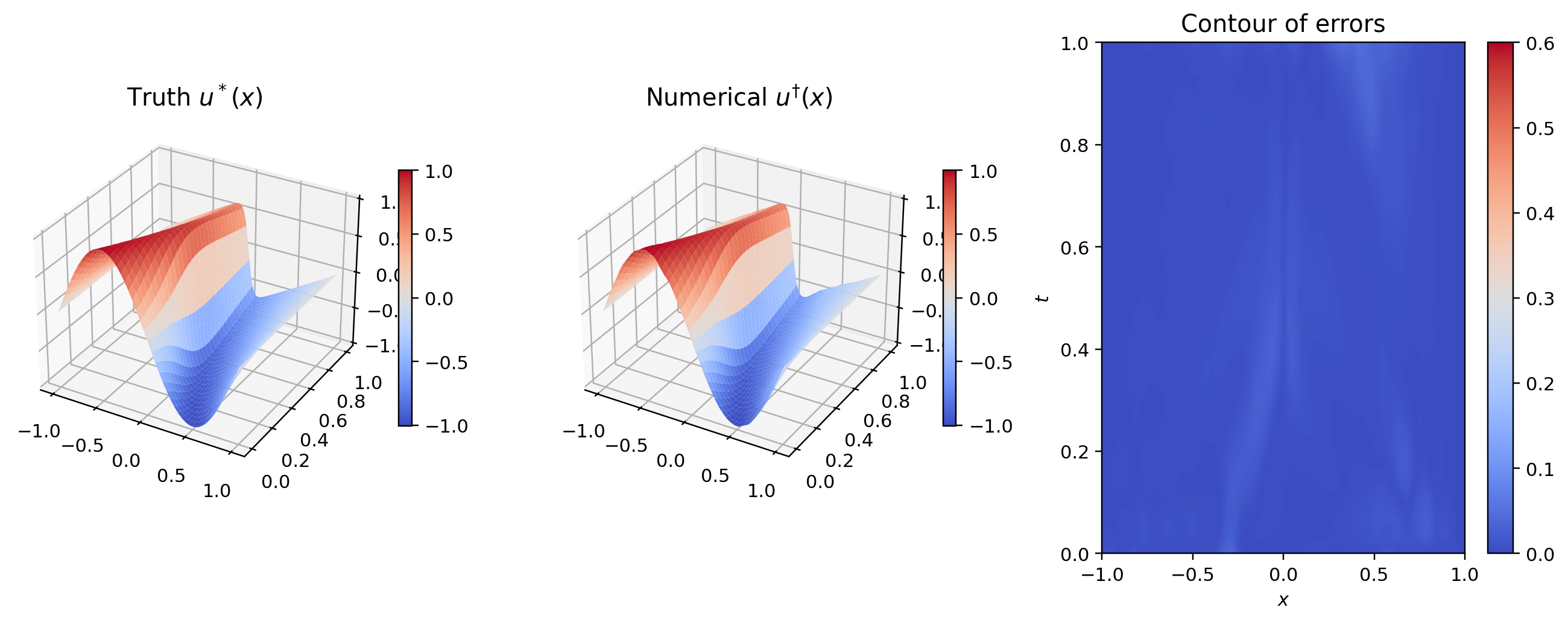}
\caption{Non-stationary Gaussian kernel -- {\MFker}:
$L_2$ error = 0.008; max error = 0.04.}
\label{fig:modified_gaussian_method1_appendix}
\end{figure}

\begin{figure}[H]
\centering
\includegraphics{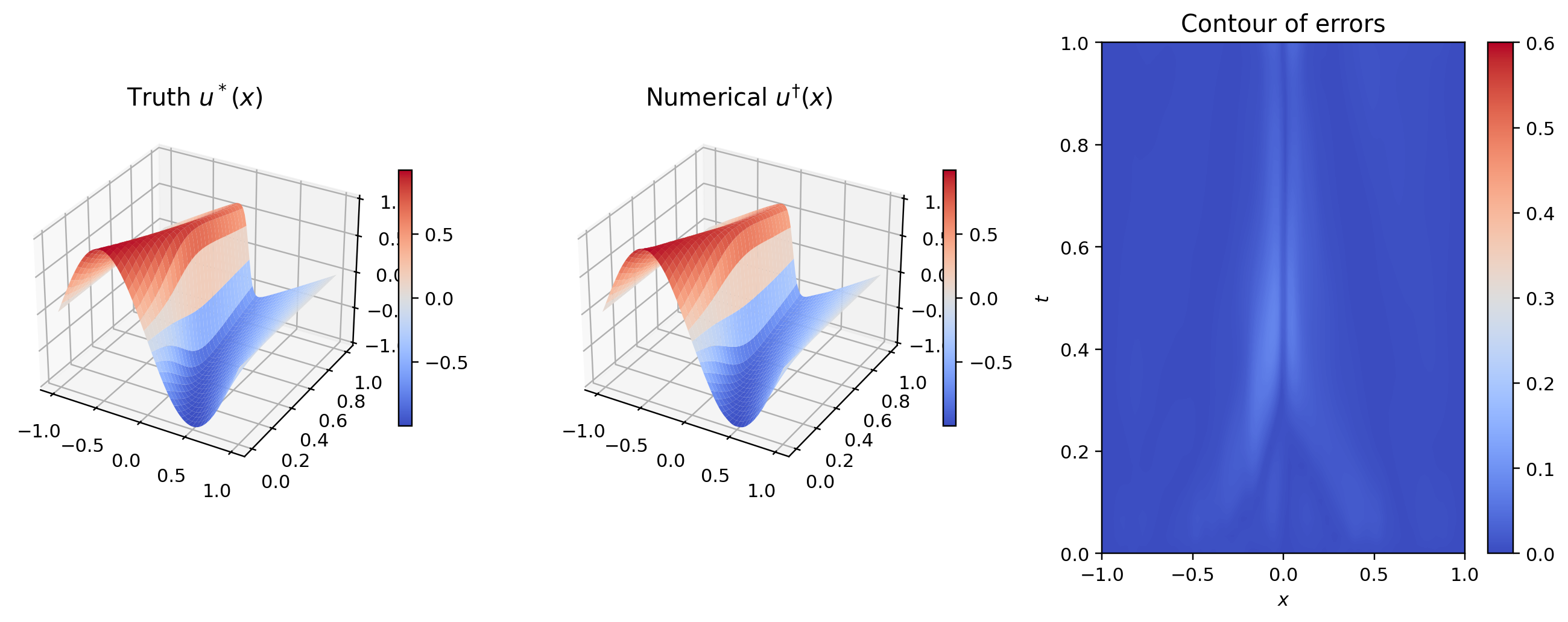}
\caption{Non-stationary Gaussian kernel -- {\MFmuker}: 
$L_2$ error = 0.01; max error = 0.08.}
\label{fig:modified_gaussian_method2_appendix}
\end{figure}

\begin{figure}[H]
\centering
\includegraphics{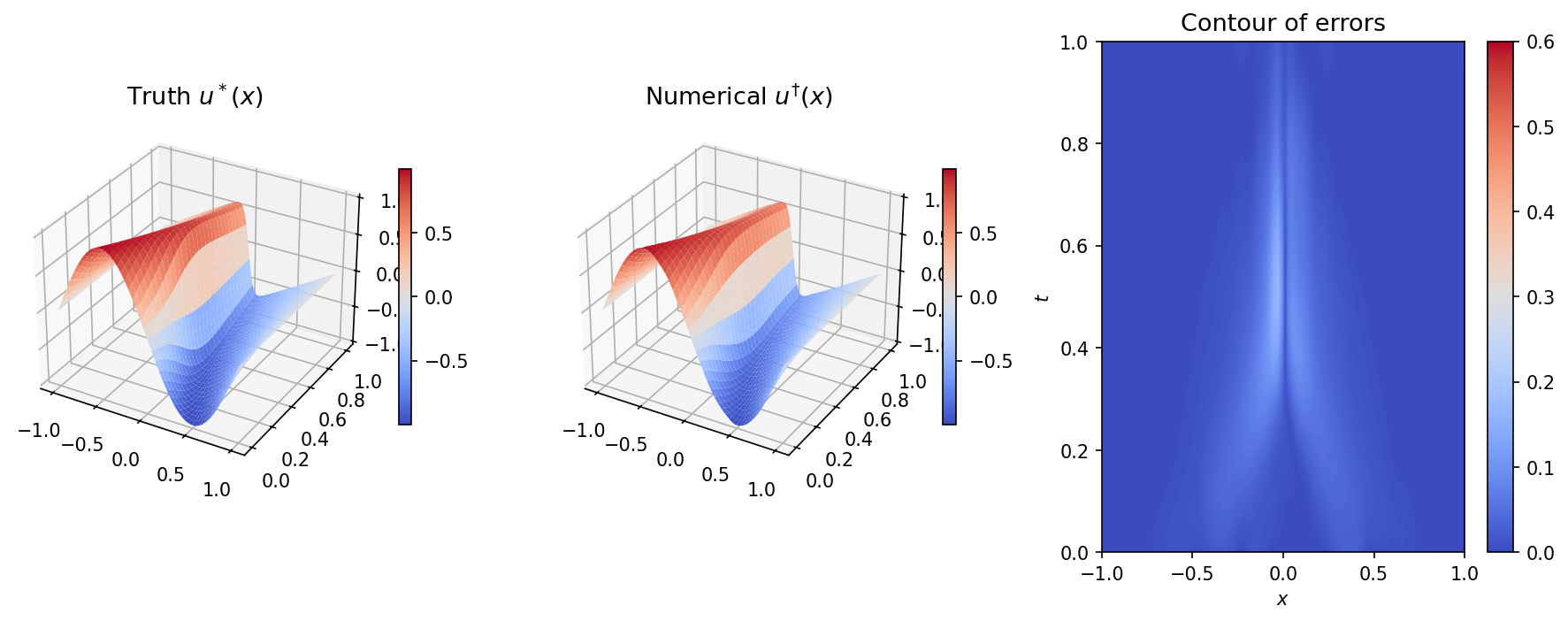}
\caption{Non-stationary Gaussian kernel -- {\MFmu}: 
$L_2$ error = 0.02; max error = 0.16.}
\label{fig:modified_gaussian_method3_appendix}
\end{figure}

Over the 80 independent realizations, the errors obtained with {\MFker} are
\[
L_2 = (4.53 \pm 0.70)\times 10^{-3},
\qquad
\|\cdot\|_{\infty} = (3.34 \pm 0.75)\times 10^{-2}.
\]
For {\MFmuker}, the errors are
\[
L_2 = (4.52 \pm 0.71)\times 10^{-3},
\qquad
\|\cdot\|_{\infty} = (3.31 \pm 0.73)\times 10^{-2}.
\]
For {\MFmu}, the errors are
\[
L_2 = (1.77 \pm 0.0050)\times 10^{-2},
\qquad
\|\cdot\|_{\infty} = (0.99 \pm 0.017)\times 10^{-1}.
\]

\subsection{Burgers with spatially varying convection coefficient}

\begin{figure}[H]
\centering
\includegraphics{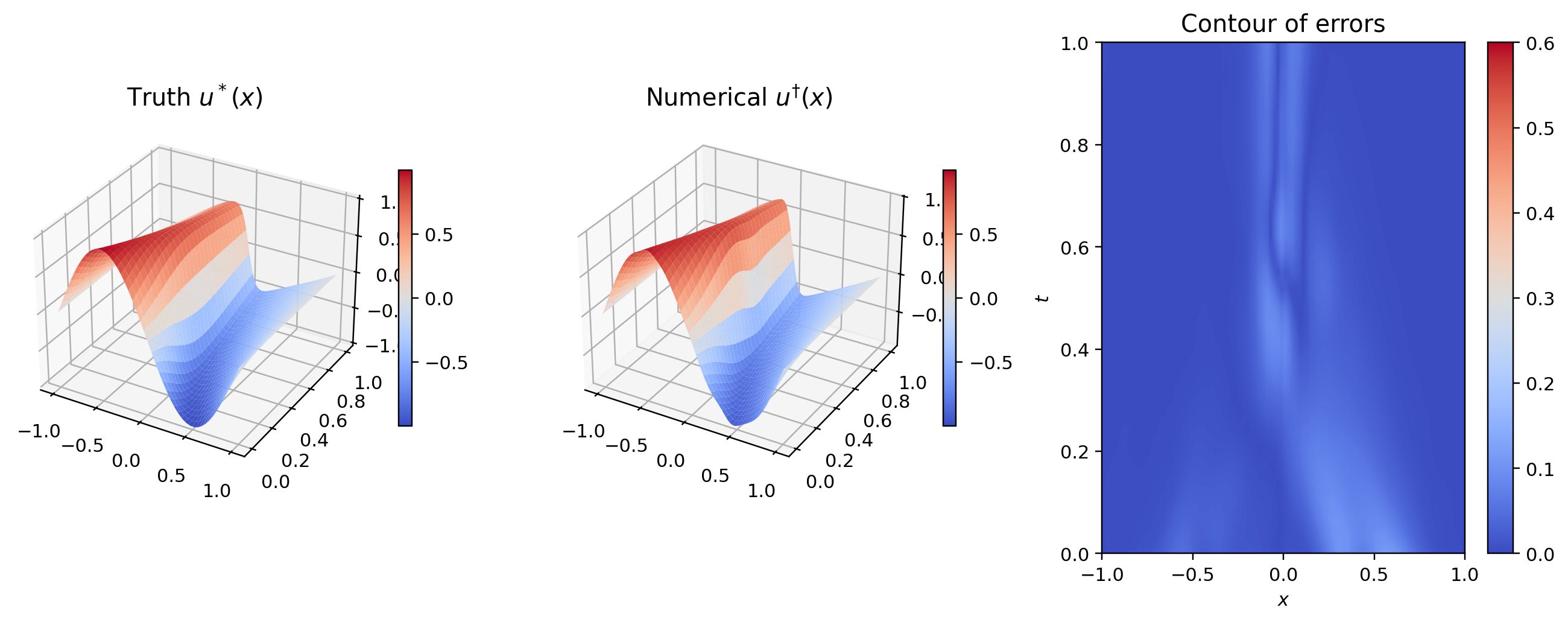}
\caption{Spatially varying \(\alpha(x)=1+0.2\sin(\pi x)\) -- {\MFmuker} (Gibbs kernel): $L_2$ error = 0.02; max error = 0.11.}
\label{fig:app_alpha_var_gibbs_m2}
\end{figure}

\begin{figure}[H]
\centering
\includegraphics{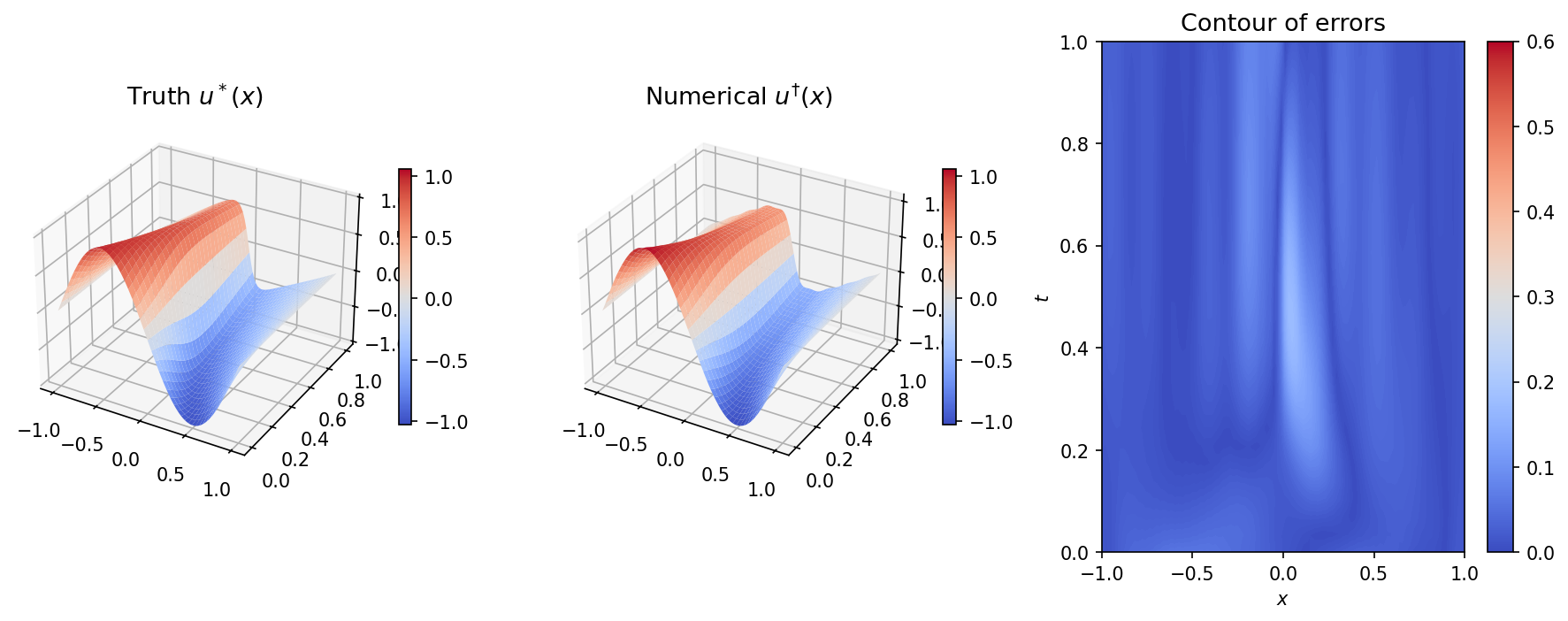}
\caption{Spatially varying \(\alpha(x)=1+0.2\sin(\pi x)\) -- {\MFmu} (Gibbs kernel): $L_2$ error = 0.03; max error = 0.17.}
\label{fig:app_alpha_var_gibbs_m3}
\end{figure}

As in the previous configurations, {\MFker} remains the most accurate approach, 
while {\MFmuker} and {\MFmu} lead to slightly larger errors. 
This confirms that, even in the presence of a structural discrepancy between 
low- and high-fidelity models, kernel learning from multifidelity information 
is the most critical step in constructing the solution within a kernel-based framework.

\end{document}